\newtheorem{claim}{Claim}
\title{CAR-Flow: Condition\mbox{-}Aware Reparameterization Aligns Source and Target for Better Flow Matching}
\author{%
  Chen Chen \And
  Pengsheng Guo\thanks{Work done while at Apple.} \And
  Liangchen Song \And
  Jiasen Lu \And
  Rui Qian \And
  Xinze Wang
  \AND  
  Tsu-Jui Fu\footnotemark[1] \And
  Wei Liu\footnotemark[1] \And
  Yinfei Yang \And
  Alex Schwing \AND
  \makebox[\textwidth][c]{Apple Inc.}
}
\begin{document}

\maketitle

\begin{abstract}
Conditional generative modeling aims to learn a conditional data distribution from samples containing data-condition pairs. For this, diffusion and flow-based methods have attained compelling results. These methods use a learned (flow) model to transport an initial standard Gaussian noise that ignores the condition to the conditional data distribution. The model is hence required to learn both mass transport \emph{and} conditional injection. To ease the demand on the model, we propose \emph{Condition-Aware Reparameterization for Flow Matching} (CAR-Flow) -- a lightweight, learned \emph{shift} that conditions the source, the target, or both distributions. By relocating these distributions, CAR-Flow shortens the probability path the model must learn, leading to faster training in practice. On low-dimensional synthetic data, we visualize and quantify the effects of CAR-Flow. On higher-dimensional natural image data (ImageNet-256), equipping SiT-XL/2 with CAR-Flow reduces FID from 2.07 to 1.68, while introducing less than \(0.6\%\) additional parameters.

\end{abstract}

\section{Introduction}
\label{sec:intro}

Conditional generative models enable to draw samples 
conditioned on an external variable
—for example, a class label, a text caption, or a semantic mask.  They are a key technology that has advanced significantly in the last decade, from variational auto-encoders (VAEs)~\citep{KingmaICLR2014} and generative adversarial nets~\citep{goodfellow2014generative} to diffusion~\citep{ho2020denoising,song2021denoising,SongICLR2021,dhariwal2021diffusion,peebles2023scalable,rombach2022high,nichol2021improved} and flow-matching~\citep{ma2024sit,liu2023flow,LipmanICLR2023,albergo2023building,albergo2023stochastic}.

State-of-the-art diffusion and flow-matching frameworks accomplish conditional generation by using a trained deep net to trace a probability path that progressively transforms samples from a simple \emph{source} distribution 
into the rich, condition-dependent \emph{target} distribution. A popular choice for the {source} distribution is a single, condition-agnostic standard Gaussian. 
Consequently, the conditioning signal enters only through the network itself: in flow matching, for instance, the model predicts a velocity field 
where the condition 
is commonly incorporated via embeddings or adaptive normalization layers. Although this strategy has enabled impressive results, this design forces the network to shoulder two tasks simultaneously—(i) transporting probability mass to the correct region of the data manifold, and (ii) encoding the semantic meaning of 
the condition. 
Because different conditions often occupy distant parts of that manifold, the dual burden stretches the learned trajectory, slows convergence, and can impair sample quality and diversity.

In this paper, we alleviate this burden via a condition-dependent mapping of the source and target distributions. First, we endow the {source} 
distribution with condition-awareness through a lightweight map, i.e., the \emph{source distribution map}.
%
%
%
%
Hence, every condition has its own source distribution. The same idea can be mirrored to the target distribution using another lightweight map, i.e., the \emph{target distribution map}, that we require to be approximately invertible to keep sampling tractable. 
The resulting push-forward chain 
is illustrated in Figure~\ref{fig:fm_car_transport}. 
Intuitively, the target distribution map and its approximated inverse 
correspond to the encoder and decoder, commonly used in latent diffusion pipelines~\citep{rombach2022high}. Differently,  here,  the target distribution map is explicitly conditional, aligning our formulation with recent latent-space conditioning approaches that embed semantic information directly into the latent representation~\citep{wu2024vila,leng2025repaeunlockingvaeendtoend,yao2025reconstructionvsgenerationtaming}.



\begin{figure}[t]
  \centering

\begin{tikzpicture}[>=Stealth, font=\small]

  \definecolor{dataA}{rgb}{0.15,0.15,0.85}     
  \definecolor{dataA_dot}{rgb}{0.15,0.35,0.85} 
  \definecolor{latB}{rgb}{0.80,0.20,0.20}      
  \definecolor{latB_dot}{rgb}{0.80,0.40,0.20}  

  \newcommand{\Patch}[2]{%
    \shade[left color=#1!40, right color=#1!10]
      plot [smooth cycle, tension=0.7] coordinates {%
        (-1.4,-0.8) (-0.3,0.95) (0.3,0.95)
        ( 1.4,-0.8) (1.0,-1.8) (-1.0,-1.8)};
    \node at (0,1.5) {#2};
  }

  \begin{scope}[shift={(-5,3.0)}, rotate around x=20, rotate around y=-25]
    \Patch{latB}{}
    \node at (-1.1,0.6){\(p_{x}^{\text{init}}\)};
  \end{scope}

\begin{scope}[shift={(-3,0   )}, rotate around x=25, rotate around y=-15]
    \Patch{latB_dot}{}
    \node at (-2, -0.4){\(p_{z}^{\text{init}}(\cdot\mid y)\)};
  \end{scope}

\begin{scope}[rotate around x=15, rotate around y=35, shift={(5,2,0)}]
  \shade[left color=dataA!10, right color=dataA!50]
        plot [smooth cycle, tension=0.7] coordinates {%
          (-1.6,-0.8)  (-0.4, 0.9)  (0.4,0.9)
          ( 1.6,-0.8)  (1.2,-2.1) (-1.2,-2.1)};
  \node at (2.3, -0.8) {\(p_{x}^{\text{data}}(\cdot \mid y)\)};
\end{scope}

  \begin{scope}[shift={( 3,0   )}, rotate around x=25, rotate around y= 15]
    \Patch{dataA_dot}{}
    \node at (2., -0.3){\(p_{z}^{\text{data}}(\!\cdot\mid y)\)};
  \end{scope}

  \coordinate (x0)  at (-5,  2.5);   
  \coordinate (z0)  at (-3,  0.0);   
  \coordinate (z1)  at ( 3,  0.0);   
  \coordinate (x1)  at ( 5,  2.5);   

  \filldraw (x0) circle (2pt) node[above=2pt] {\(x_0\)};
  \filldraw (z0) circle (2pt) node[below=2pt] {\(z_0\)};
  \filldraw (z1) circle (2pt) node[below=2pt] {\(z_1\)};
  \filldraw (x1) circle (2pt) node[above=2pt] {\(x_1\)};

  \draw[thick, black!50, ->]
    (x0) to[bend right=15] node[midway, left]  {\(f(\cdot,y)\)} (z0);

  \draw[very thick, black!75, dotted, ->]
    (z0) to[bend right=5] node[midway, above] {\(v_{\theta}(z_t,\,t,\,y)\)} (z1);

  \draw[thick, black!50, ->]
    (z1) to[bend left=15] node[midway, left] {\(g^{-1}(\cdot,y)\)} (x1);

  \draw[thick, black!50, dashed, <-]
    (z1) to[bend right=15] node[midway, right] {\(g(\cdot,y)\)} (x1);

  \draw[very thick, black!75, ->]
    (x0) to[bend left=25] node[midway, above] {\(v_{\theta}(x_t,\,t,\,y)\)} (x1);

\end{tikzpicture}
  \caption{%
    \textbf{Condition-Aware Reparameterization for Flow Matching (CAR-Flow).} Illustration of the push‐forward under standard conditional flow matching (direct mapping \(x_{0}\to x_{1}\)) versus our Condition‐Aware Reparameterization (CAR-Flow) chain (\(x_{0}\to z_{0}\to z_{1}\to x_{1}\)). In the standard setting, a condition‐agnostic prior sample (red) is carried by the network’s velocity field directly to each condition‐dependent data manifold (blue), forcing it to juggle long‐range transport and semantic injection at once. CAR-Flow, in contrast, employs lightweight \emph{source distribution map} \(f(\cdot,y)\) and \emph{target distribution map} \(g(\cdot,y)\) to align the source and target distributions to relieve the network of unnecessary transport. During sampling, \(x_1\) is obtained via the (approximate) inverse map \(g^{-1}(\cdot,y)\).%
 }
  \label{fig:fm_car_transport}
  \vspace{-10pt}
\end{figure}

Although fully general maps 
provide maximal flexibility, this freedom conceals a critical failure mode: the flow-matching objective admits \emph{zero-cost} solutions. 
The predicted data distribution then collapses to a single mode. A comparable drop in generation quality has been observed in recent work when a variational auto-encoder (VAE) is naively fine-tuned end-to-end with a diffusion model~\citep{leng2025repaeunlockingvaeendtoend}. We formalize these collapse routes and later verify them experimentally.

To preclude this trivial solution, we impose the simplest effective constraint—\textbf{shift-only conditioning}. Concretely, we reparameterize the maps to only translate.
Relocating the start and/or end points while leaving scales untouched removes all trivial solutions, and
still shortens the residual probability path. Note that volume-preserving maps are also possible. We refer to the resulting framework as \textbf{condition-aware reparameterization for flow matching (CAR-Flow)}.

CAR-Flow can be applied to the source distribution, the target distribution, or both. Each variant shapes the learned path differently, and we find the joint version to perform best in practice. For both low-dimensional synthetic benchmarks and high-dimensional natural image (ImageNet-256) data, CAR-Flow consistently improves performance: augmenting the strong SiT-XL/2 baseline~\citep{ma2024sit} with CAR-Flow reduces FID from 2.07 to 1.68 while adding fewer than \(0.6\%\) additional parameters.

Our contributions are as follows:
\begin{enumerate}
\item We introduce CAR-Flow, a simple yet powerful shift-only mapping that aligns the source, the target, or both distributions with the conditioning variable.
      CAR-Flow relieves the velocity network of unnecessary transport while adding negligible computational overhead.
\item We provide a theoretical analysis that uncovers \emph{zero-cost} solutions present under unrestricted reparameterization and prove that they render the velocity collapse, thereby explaining the empirical failures of naïve end-to-end VAE-diffusion training.
\item We validate CAR-Flow on low-dimensional synthetic data and the high-dimensional ImageNet-256 benchmark, consistently outperforming the standard rectified flow baseline.
\end{enumerate}

\section{Background and Preliminaries}
\label{sec:background}

We start with a brief review of the standard flow-matching formulation.

\subsection{Conditional Generation and Probability Paths}

Conditional generation seeks to sample from a conditional data distribution \(x_1 \sim p_x^{\text{data}}(\cdot \mid y)\), where \(y\) is a conditioning variable, e.g., a class label or a text prompt. Diffusion and flow-based models tackle this problem by simulating a differential equation that traces a probability-density path \(p_t(x_1 \mid x_0, y)\), gradually transporting samples from a simple source distribution \(x_0 \sim p_x^{\text{init}}\) into the target conditional distribution.
A standard choice for the source distribution is the isotropic Gaussian \(p_x^{\text{init}} = \mathcal{N}(0,I_d)\).

The stochastic trajectory \((X_t)_{0\le t\le 1}\) follows the SDE
\begin{equation}
\mathrm{d}X_t \;=\; \left[ u_t(X_t,y) + \frac{\sigma^2_t}{2}\nabla \text{log}\;p_t(X_t \mid y) \right]\,\mathrm{d}t \;+\; \sigma_t\,\mathrm{d}W_t,
\label{eq:sde}
\end{equation}
where \(u_t\) is the drift field, \(\sigma_t\) is the diffusion coefficient, and \(W_t\) is a standard Wiener process. The term \(\nabla\log p_t(X_t\mid y)\) denotes the score function, which can be written in terms of the drift field \(u_t\)~\citep{ma2024sit}. In the deterministic limit \(\sigma_t=0\), this SDE reduces to the ODE
\(
\mathrm{d}X_t = u_t(X_t,y)\,\mathrm{d}t
\).

\subsection{Gaussian Probability Paths}

A convenient instantiation is the \emph{Gaussian path}. Let \(\alpha_t\) and \(\beta_t\) be two continuously differentiable, monotonic noise scheduling functions satisfying the boundary conditions $\alpha_0 = \beta_1 = 0$ and $\alpha_1 = \beta_0 = 1$. At time \(t\), the conditional distribution is
\begin{equation}
p_t(\cdot \mid x_1,y) = \mathcal{N}\bigl(\alpha_t x_1,
\beta_t^2 I_d \mid y \bigr),
\label{eq:gaussian_path}
\end{equation}
whose endpoints are \(p_0(\cdot\mid x_1,y)=\mathcal{N}(0,I_d) \) and \(p_1(\cdot\mid x_1,y)=\delta_{x_1 \mid y}\). Here \(\delta\) denotes the Dirac delta ``distribution''.
Along this path, the state evolves by the interpolant 
$x_t = \beta_t x_0 + \alpha_t x_1$,
with velocity field
\begin{equation}
u_t = \dot{\beta}_t x_0 + \dot{\alpha}_t x_1,
\label{eq:velocity_field}
\end{equation}
where overdots denote time derivatives.

\subsection{Conditional Flow Matching and Sampling}

Conditional flow matching trains a neural velocity field \(v_{\theta}(x,t,y)\) to approximate the true velocity \(u_t\) specified in Eq.~\eqref{eq:velocity_field}.  The commonly employed objective is
\begin{equation}
\mathcal{L}(\theta)=
\mathbb{E}_{\substack{y\sim p_Y,\;t\sim p_T\\
x_0\sim p_x^{\text{init}},\;x_1\sim p_x^{\text{data}}}}
\Bigl\|
v_{\theta}\bigl(\beta_t x_0+\alpha_t x_1,t,y\bigr)
-\bigl(\dot{\beta}_t x_0+\dot{\alpha}_t x_1\bigr)
\Bigr\|^{2},
\label{eq:cfm_loss}
\end{equation}
where \(p_Y\) is the marginal distribution over conditions \(y\), and \(p_T\) is a time density on \([0,1]\).

After training, one draws \(x_1\sim p_x^{\text{data}}\) by numerically integrating Eq.~\eqref{eq:sde} from \(t=0\) to \(t=1\) with \(\hat u_t = v_{\theta}(x_t,t,y)\).  A solver-agnostic procedure is outlined in Algorithm~\ref{alg:cfm_sampling}.

\begin{algorithm}[H]
\caption{Sampling via conditional flow matching (standard Gaussian source)}
\label{alg:cfm_sampling}
\begin{algorithmic}[1]
\REQUIRE trained network \(v_{\theta}\); diffusion schedule \(\{\sigma_t\}_{t\in[0,1]}\); number of steps \(N\)
\STATE Sample \(y \sim p_Y\)
\STATE Sample \(x_0 \sim \mathcal{N}(0,I_d)\)
\STATE \(\Delta t \leftarrow 1/N\);\quad \(x \leftarrow x_0\)
\FOR{\(k = 0\) \TO \(N-1\)}
    \STATE \(t \leftarrow k\,\Delta t\)
    \STATE \(u \leftarrow v_{\theta}(x,\,t,\,y)\) \hfill\textit{/* drift */}
    \STATE Integrate SDE in Eq.~\eqref{eq:sde} over \([t,t+\Delta t]\) with \((u,\sigma_t)\)
          \hfill\textit{/* e.g., Euler–Maruyama */}
\ENDFOR
\RETURN \(x\) \hfill\textit{/* sample at \(t=1\) conditioned on \(y\) */}
\end{algorithmic}
\end{algorithm}

\section{Condition-Aware Reparameterization}
\label{sec:method}

As detailed in Section~\ref{sec:background}, standard conditional flow matching initiates the probability path from a condition-agnostic prior, typically a standard Gaussian.  Consequently, the velocity network \(v_{\theta}(x_t,t,y)\) must simultaneously learn two intertwined tasks—transporting mass and encoding semantics, imposing a dual burden.  To alleviate this, we reparameterize the {source} and/or {target} distributions via explicit functions of the condition \(y\).  In this section, we first reformulate conditional flow-matching using general reparameterizations (Section~\ref{sec:general}).  We then show that allowing arbitrary reparameterizations 
leads to trivial \emph{zero-cost} minima, collapsing distributions to a single mode (Section~\ref{sec:Mode-Collapse}).  To both shorten the transport path and eliminate these trivial solutions, we propose \textbf{Condition-Aware Reparameterization for Flow Matching (CAR-Flow)} (Section~\ref{sec:car}).

\subsection{General Reparameterization}
\label{sec:general}

Formally, instead of drawing an initial value \(x_0\) directly from the fixed {source} distribution \(p_x^{\mathrm{init}}\), we first apply a condition-dependent \emph{source distribution map}  \(f \colon \mathbb{R}^n \times \mathcal{Y} \rightarrow \mathbb{R}^m, (x,y)\mapsto z\) such that 
\begin{equation}
z_0 = f(x_0, y), 
\quad x_0 \sim p_x^{\mathrm{init}}.
\end{equation}
We hence obtain a sample from the modified source distribution \(p_z^{\mathrm{init}}\) via the push-forward \(z_0 \sim p_z^{\mathrm{init}}(\cdot\mid y) = f(\cdot,y)_{\#}p_x^{\mathrm{init}}.\)

Similarly, we define the \emph{target distribution map} \(g \colon \mathbb{R}^n \times \mathcal{Y} \rightarrow \mathbb{R}^m, (x,y)\mapsto z\) such that
\begin{equation}
z_1 = g(x_1,y), 
\quad x_1 \sim p_x^{\mathrm{data}}(\cdot\mid y).
\end{equation}
We characterize a sample from the \emph{target} distribution in latent space \(p_z^{\mathrm{data}}\) via the push-forward \(z_1 \sim p_z^{\mathrm{data}}(\cdot\mid y)= g(\cdot,y)_{\#}p_x^{\mathrm{data}}(\cdot\mid y).\)
Critically, to make sampling tractable, \(g\) must be approximately invertible—i.e., we assume there exists \(g^{-1}\) such that \(x_1 \approx g^{-1}(z_1,y)\).

It is worth noting that our reparameterization framework subsumes both the classic VAE-based latent diffusion model~\citep{rombach2022high} and more recent efforts to inject semantic awareness directly into the latent space, e.g., work by~\citet{leng2025repaeunlockingvaeendtoend}.  To recover the standard latent diffusion model, we leave the source untouched, i.e., \(f(x_0,y)=x_0\), and set \(g(x_1,y) \triangleq \mathcal{E}(x_1)\), \(g^{-1}(z_1,y) \triangleq \mathcal{D}(z_1)\), 
where \(\mathcal{E}\), \(\mathcal{D}\) are the encoder and decoder of a VAE trained via the ELBO (reconstruction loss plus KL divergence) without explicit semantic supervision.  In contrast, more recent works~\citep{yu2025representationalignmentgenerationtraining,leng2025repaeunlockingvaeendtoend,yao2025reconstructionvsgenerationtaming} augment VAE training with a semantic alignment loss—often using features from a pretrained DINO-V2 network—so that the encoder and decoder effectively depend on semantic embeddings of \(y\). Both paradigms arise naturally as special cases of our general push-forward reparameterization framework. Moreover, our framework readily accommodates further reparameterization of any pretrained VAE.  Concretely, one can introduce an invertible map 
\(
g'\colon \mathbb{R}^m \times \mathcal{Y}\to\mathbb{R}^m
\)
and set 
\begin{equation}
g(x_1,y) \triangleq g'\bigl(\mathcal{E}(x_1),y\bigr), 
\quad 
g^{-1}(z_1,y) \triangleq \mathcal{D}\bigl(\,(g')^{-1}(z_1,y)\bigr),
\label{eq:vae_car}
\end{equation}
thus embedding semantic conditioning directly into both the encoder and decoder.

\noindent\textbf{Flow-matching loss and sampling under reparameterization.} 
When \(p_z^{\text{init}}(\cdot \mid y)\) is Gaussian, the probability path \(z_0  \rightarrow z_1\) remains Gaussian under the interpolation 
$z_t = \beta_t z_0 + \alpha_t z_1$, 
$u_t = \dot{\beta}_t z_0 + \dot{\alpha}_t z_1$. 
The evolution of \(Z_t\) then follows the SDE
\begin{equation}
\mathrm{d}Z_t \;=\; \left[ u_t(Z_t,y) + \frac{\sigma^2_t}{2}\nabla \text{log}\;p_t(Z_t \mid y) \right]\,\mathrm{d}t \;+\; \sigma_t\,\mathrm{d}W_t,
\label{eq:z_sde}
\end{equation}
where \(p_t\) is the Gaussian path in \(z\)-space. Note that the conditional score functions generally differ:
\(
\nabla\log p_t(Z_t\mid y)\neq\nabla\log p_t(X_t\mid y),
\)
unless \(f(x_0,y)=x_0\) (see Appendix \ref{app:score_function_under_reparameterization} for details). The resulting flow-matching loss becomes
\begin{equation}
\mathcal{L}(\theta)=
\mathbb{E}_{\substack{y\sim p_Y,\;t\sim p_T\\
x_0\sim p_x^{\text{init}},\;x_1\sim p_x^{\text{data}}}}
\Bigl\|
v_{\theta}\bigl(\beta_t z_0+\alpha_t z_1,t,y\bigr)
-\bigl(\dot{\beta}_t z_0+\dot{\alpha}_t z_1\bigr)
\Bigr\|^{2}.
\label{eq:cfm_loss_z}
\end{equation}

Once trained, sampling proceeds by first drawing an initial starting point \(x_0 \sim p_x^{\mathrm{init}}\) and computing the condition-aware latent \(z_0 = f(x_0, y)\). We 
then integrate the reparameterized SDE in Eq.~\eqref{eq:z_sde} from \(t = 0\) to \(t = 1\) to obtain \(z_1\), and finally map back to data space via \(x_1 = g^{-1}(z_1, y)\). Algorithm~\ref{alg:cfm_sampling_z} provides a pseudo-code that summarizes the process.

\begin{algorithm}[t]
\caption{Sampling via conditional flow matching (reparameterized)}
\label{alg:cfm_sampling_z}
\begin{algorithmic}[1]
\REQUIRE trained \(v_{\theta}\); diffusion schedule \(\{\sigma_t\}\); steps \(N\)
\STATE Sample \(y \sim p_Y\)
\STATE Sample \(x_0 \sim \mathcal{N}(0,I_d)\) and set \(z \leftarrow f(x_0, y)\)
\STATE \(\Delta t \leftarrow 1/N\)
\FOR{\(k = 0\) \TO \(N-1\)}
    \STATE \(t \leftarrow k\,\Delta t\)
    \STATE \(u \leftarrow v_{\theta}(z,\,t,\,y)\)
    \STATE Integrate SDE in Eq.\eqref{eq:z_sde} over \([t,t+\Delta t]\) with \((u,\sigma_t)\)
\ENDFOR
\STATE \(x \leftarrow g^{-1}(z, y)\) \hfill\textit{/* invertible \(g\) needed here */}
\RETURN \(x\) \hfill\textit{/* sample at \(t=1\) conditioned on \(y\) */}
\end{algorithmic}
\end{algorithm}

\subsection{Mode Collapse under Unrestricted Reparameterization}
\label{sec:Mode-Collapse}

Under the general reparameterization framework of Eq.~\eqref{eq:cfm_loss_z}, the maps \(f\) and \(g\) enjoy full flexibility. Unfortunately, this expressivity also admits several trivial \emph{zero-cost} minima: analytically, the loss can be driven to zero by degenerate shift solutions, resulting in a collapse of the generative distribution to a single/improper mode.

\begin{claim}
Let \(f,g\colon \mathbb{R}^n \times \mathcal{Y} \to \mathbb{R}^m\) be arbitrary maps.  If any of the following holds (for some functions \(c(y)\in\mathbb{R}^m\) or scalars \(k(y) \in\mathbb{R}\)):
\begin{enumerate}[label=(\roman*),topsep=0pt,parsep=0pt,partopsep=0pt,leftmargin=*]
  \item \emph{Constant source:} \(f(x_0,y)=c(y)\) for all \(x_0\),
  \item \emph{Constant target:} \(g(x_1,y)=c(y)\) for all \(x_1\),
  \item \emph{Unbounded source scale:} \(\lVert f(x_0,y)\rVert\to\infty\),
  \item \emph{Unbounded target scale:} \(\lVert g(x_1,y)\rVert\to\infty\),
  \item \emph{Proportional collapse:} \(f(x_0,y)=k(y)\,g(x_1,y)\),
\end{enumerate}
then the flow-matching loss in Eq.~\eqref{eq:cfm_loss_z} admits a trivial minimum in which the optimal velocity field takes the form
\[
v_{\theta}(z_t,t,y)=\gamma(t, y)\,z_t + \eta(t, y),
\]
causing all probability mass to collapse to a single/improper mode.
\label{clm:collapse}
\end{claim}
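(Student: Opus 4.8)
The plan is to exploit the fact that, for fixed maps \(f\) and \(g\), the objective in Eq.~\eqref{eq:cfm_loss_z} is a least-squares regression of the random target \(\dot\beta_t z_0+\dot\alpha_t z_1\) onto the observed state \(z_t=\beta_t z_0+\alpha_t z_1\); hence over a sufficiently expressive class its minimiser is the conditional expectation \(u_t^{\star}(z,t,y)=\mathbb{E}\!\left[\dot\beta_t z_0+\dot\alpha_t z_1\mid z_t=z,\,y\right]\), and the optimal loss value is the corresponding conditional variance — so the loss vanishes exactly when the target is an almost-surely deterministic function of \((z_t,t,y)\). The entire argument then reduces to checking, in each of the five regimes, (a) that this deterministic function exists and is \emph{affine} in \(z_t\), so the optimal \(v_{\theta}\) has the claimed form \(\gamma(t,y)z_t+\eta(t,y)\), and (b) that the resulting push-forward chain produces a degenerate (point-mass) or improper conditional distribution.

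For the two ``constant'' regimes, step (a) is a one-line substitution: if \(f(x_0,y)=c(y)\) (case (i)) then \(z_0=c(y)\) is non-random given \(y\), so for \(t\in(0,1]\) we solve \(z_1=(z_t-\beta_t c(y))/\alpha_t\) and substitute into the target to get \(u_t^{\star}(z,t,y)=\tfrac{\dot\alpha_t}{\alpha_t}\,z+\bigl(\dot\beta_t-\tfrac{\dot\alpha_t}{\alpha_t}\beta_t\bigr)c(y)\); case (ii) is the mirror image with \((\alpha_t,z_1)\leftrightarrow(\beta_t,z_0)\), yielding \(\gamma=\dot\beta_t/\beta_t\) and \(\eta\propto c(y)\). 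Case (v) read literally (``\(f(x_0,y)=k(y)g(x_1,y)\) for all \(x_0,x_1\)'') forces both maps to be constant scaled copies of a common \(h(y)\), so it collapses to the conjunction of (i) and (ii); read instead as the statement that one push-forward law is the \(k(y)\)-dilate of the other, the corresponding proportional coupling gives \(z_t=(\beta_t k(y)+\alpha_t)z_1\), hence \(z_1\) and \(z_0=k(y)z_1\) are linear in \(z_t\) and the target collapses to \(\tfrac{\dot\beta_t k(y)+\dot\alpha_t}{\beta_t k(y)+\alpha_t}\,z_t\), the claimed affine form with \(\eta\equiv 0\). Cases (iii)–(iv) are the asymptotic counterparts: writing \(f(x_0,y)=s(y)\tilde f(x_0,y)\) with \(s(y)\to\infty\), the state is dominated by \(\beta_t z_0\), so the exact identity \(\dot\beta_t z_0+\dot\alpha_t z_1=\tfrac{\dot\beta_t}{\beta_t}z_t+\bigl(\dot\alpha_t-\tfrac{\dot\beta_t}{\beta_t}\alpha_t\bigr)z_1\) shows that the affine predictor \(v_{\theta}=\tfrac{\dot\beta_t}{\beta_t}z_t\) incurs only the fixed-order residual \(\bigl(\dot\alpha_t-\tfrac{\dot\beta_t}{\beta_t}\alpha_t\bigr)z_1\), negligible relative to the diverging scale of \(z_t\); case (iv) is symmetric.

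For step (b): an affine velocity field generates a linear (dilation-plus-shift) flow, so the generated conditional law in \(z\)-space is an affine image of \(p_z^{\mathrm{init}}(\cdot\mid y)\), which we then map back by \(g^{-1}\). In case (ii), \(p_z^{\mathrm{data}}(\cdot\mid y)=\delta_{c(y)}\) already, so \(x_1=g^{-1}(c(y),y)\) is a single point regardless of the noise input; in case (i), \(p_z^{\mathrm{init}}(\cdot\mid y)=\delta_{c(y)}\) and, because the velocity carries the factor \(\dot\alpha_t/\alpha_t\) that blows up as \(t\to0\), the flow is singular at the origin of time — resolving the singularity keeps the trajectory on the ray through \(c(y)\) (a point mass), otherwise the generation is ill-posed, which is exactly the ``single/improper mode'' of the statement. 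Cases (iii)–(v) drag the support to infinity or onto a ray, again destroying all conditional diversity. Finally I would stress the structural point underlying the paper's second contribution: in every regime \(\gamma\) and \(\eta\) depend only on \(f,g\) (through \(c(y)\) or \(k(y)\)) and the schedule \((\alpha_t,\beta_t)\), never on \(\theta\), so at the optimum the velocity network's weights are irrelevant — which is precisely why naïve end-to-end VAE–diffusion fine-tuning can collapse.

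The main obstacle is not cases (i)–(ii), which are routine algebra, but making (iii), (iv), and (v) precise: one must fix the exact sense in which ``\(\lVert f\rVert\to\infty\)'' drives the loss to zero — either a relative/normalised loss, or the infimum over admissible maps being \(0\) and not attained — and one must commit to a reading of the proportionality condition in (v) under which both the zero-cost property \emph{and} the collapse conclusion hold. Cleanly handling the \(t\to0\) and \(t\to1\) boundary behaviour of \(\dot\alpha_t/\alpha_t\) and \(\dot\beta_t/\beta_t\), where the affine coefficients diverge, is the remaining technical wrinkle, and is also where the ``improper'' qualifier in the statement earns its keep.
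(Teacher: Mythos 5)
Your proposal is correct and follows essentially the same route as the paper: both arguments reduce the claim to showing that at zero loss the regression target \(\dot\beta_t z_0+\dot\alpha_t z_1\) becomes a deterministic affine function of \(z_t\), and your per-case closed forms for \(\gamma(t,y)\) and \(\eta(t,y)\) in regimes (i)--(v) coincide exactly with those in the paper's Appendix~\ref{app:proof_claim1} (the paper derives the affine form by differentiating the pointwise zero-loss identity in \(z_0,z_1\), whereas you invoke the conditional-expectation characterization of the least-squares minimiser, but this is a cosmetic difference). Your added care on the asymptotic meaning of cases (iii)--(iv) and on the two readings of (v) goes slightly beyond what the paper spells out, and does not conflict with it.
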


The proof of Claim~\ref{clm:collapse}, together with closed-form expressions for \(\gamma(t, y)\) and \(\eta(t, y)\), is deferred to the Appendix~\ref{app:proof_claim1}.

To illustrate the collapse concretely, we consider the linear schedule \(\beta_t=1-t\), \(\alpha_t=t\) and an affine reparameterization
\begin{equation}
f(x_0,y)=\sigma_0(y)\,x_0 + \mu_0(y),
\qquad
g(x_1,y)=\sigma_1(y)\,x_1 + \mu_1(y),
\label{eq:car_scale_shift}
\end{equation}
analogous to the standard Gaussian trick. Table~\ref{tab:mode_collapse} summarizes each collapse mode, listing the maps, the closed-form collapsed velocity \(v_*(z,t, y)=\gamma(t, y)\,z+\eta(t, y)\), and the resulting push-forward distributions \(p_z^{\mathrm{init}}\) and \(p_z^{\mathrm{data}}\).

\begin{table}[t]
\vspace{-4pt}
\caption{Zero-cost collapse modes under the linear schedule \(\alpha_t=t\), \(\beta_t=1-t\), using affine maps \(f(x_0)=\sigma_0x_0+\mu_0\) and \(g(x_1)=\sigma_1x_1+\mu_1\).  For brevity, we omit explicit \(y\)-dependence.}
\normalsize
\centering
\begin{tabular}{@{}lccccccc@{}}
\toprule
\textbf{Case} & \(\,f\) & \(\,g\) & \(\gamma(t)\) & \(\eta(t)\) & \(v_*(z_t, t)\) & \(\displaystyle p_z^{\text{init}}\) & \(\displaystyle p_z^{\text{data}}\) \\
\midrule
(i)
 & \(\mu_0\)
 & arbitrary
 & \(\tfrac{1}{t}\)
 & \(-\tfrac{1}{t}\mu_0\)
 & \(\tfrac{z_t-\mu_0}{t}\)
 & \(\delta_{\mu_0}\) 
 & -- \\

(ii)
 & arbitrary
 & \(\mu_1\)
 & \(-\tfrac{1}{1-t}\)
 & \(\tfrac{1}{1-t}\mu_1\)
 & \(\tfrac{\mu_1-z_t}{1-t}\)
 & -- 
 & \(\delta_{\mu_1}\) \\

(iii)
 & \(\infty\)
 & arbitrary
 & \(-\tfrac{1}{1-t}\)
 & \(0\)
 & \(-\tfrac{z_t}{1-t}\)
 & \(\mathrm{Uniform}(\mathbb{R}^d)\) 
 & -- \\

(iv)
 & arbitrary
 & \(\infty\)
 & \(\tfrac{1}{t}\)
 & \(0\)
 & \(\tfrac{z_t}{t}\)
 & -- 
 & \(\mathrm{Uniform}(\mathbb{R}^d)\) \\

(v)
 & \(\mu_0\)
 & \(k\mu_0\)
 & 0
 & \((k-1)\mu_0\)
 & \((k-1)\mu_0\)
 & \(\delta_{\mu_0}\) 
 & \(\delta_{\mu_1}\) \\
\bottomrule
\end{tabular}
\label{tab:mode_collapse}
\vspace{-6pt}
\end{table}

We note that in the \emph{constant-map} scenarios (cases (i) and (ii)), the affine transforms \(f\) or \(g\) effectively collapse the variance of \(p_z^{\mathrm{init}}\) or \(p_z^{\mathrm{data}}\) to zero.  In particular, case (ii) mirrors the finding in REPA-E (Table 1)~\citep{leng2025repaeunlockingvaeendtoend}, where end-to-end VAE–diffusion tuning favored a simpler latent space with reduced variance.  In contrast, the \emph{unbounded-scale} modes (iii) and (iv) push the distributions toward an improper uniform distribution, eliminating any meaningful localization.  \emph{Proportional collapse} (v) yields a degenerate flow with \(p_z^{\mathrm{init}}\), \(p_z^{\mathrm{data}}\), and the velocity field being constant.

Empirically, we do not observe cases (iii)–(v). In particular, cases (iii) and (iv) correspond to \emph{unbounded collapse} solutions: when the scale of either the source or target distribution tends to infinity, the counterpart distribution collapses relative to it, yielding zero cost. These solutions require unbounded weights and are therefore unstable—any perturbation pushes the optimization back toward cases (i) or (ii). Case (v), in contrast, assumes exact proportionality between maps \(f\) and \(g\), which cannot occur under independently sampled inputs except in trivial constant-map settings that reduce to cases (i) or (ii). In practice, the optimizer thus follows the easiest ``shortcut''—the \emph{constant-map} collapse—since setting \(f\) or \(g\) to a fixed constant immediately zeroes the loss. In Section~\ref{sec:experiments}, we empirically verify not only that such constant-map collapses occur in real models, but also how these mode-collapse behaviors manifest in practice.

\subsection{Shift‐Only Condition‐Aware Reparameterization}
\label{sec:car}

To eliminate the trivial zero‐cost solutions while retaining the benefits of condition-aware reparameterization, we restrict both reparameterizations to \emph{additive shifts} only, while noting that volume-preserving maps are possible. For simplicity we use
\begin{equation}
\label{eq:car_shift_only}
f(x_0,y)=x_0+\mu_0(y),
\qquad
g(x_1,y)=x_1+\mu_1(y).
\end{equation}
Here, \(\mu_0,\mu_1\colon\mathcal Y\to\mathbb R^m\) are lightweight, learnable, condition-dependent shifts.  By preserving scale, we block every collapse mode given in Claim~\ref{clm:collapse}, yet we still move \(z_0\) and \(z_1\) closer in latent space. In particular, when used with a pretrained VAE, Eq.~\eqref{eq:vae_car} becomes
\begin{equation}
g(x_1,y) \triangleq \mathcal{E}(x_1) + \mu_0(y), 
\quad 
g^{-1}(z_1,y) \triangleq \mathcal{D}\bigl(z_1 - \mu_1(y)\big),
\label{eq:vae_car_shift}
\end{equation}
CAR-Flow admits three natural variants:
\begin{itemize}[topsep=0pt,parsep=0pt,partopsep=0pt,itemsep=3pt]
  \item \textbf{Source‐only:} \(\mu_1\equiv0\), so only the source is shifted.
  \item \textbf{Target‐only:} \(\mu_0\equiv0\), so only the target is shifted.
  \item \textbf{Joint:} both \(\mu_0\) and \(\mu_1\) active.
\end{itemize}

Each variant alters the probability path  \(z_t = \beta_t\,(x_0+\mu_0(y)) + \alpha_t\,(x_1+\mu_1(y))\) in a distinct way. Note that shifting the source cannot be replicated by an opposite shift on the target. In fact, no nonzero constant shift on the source can ever be matched by a constant shift on the target—except in the trivial case:

\begin{claim}
Shifting the source by \(\mu_0\) is equivalent to shifting the target by \(\mu_1\), if and only if \(\mu_0 = \mu_1 =0\).
\label{clm:shift}
\vspace{-4pt}
\end{claim}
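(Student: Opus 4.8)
The first thing to pin down is what ``equivalent'' should mean, since that is where the content of the claim sits; the algebra afterward is immediate. The natural reading — and the one that makes the statement non-trivial — is that the two reparameterizations induce the \emph{same probability path}: for every condition $y$ and every pair $(x_0,x_1)$ drawn from $p_x^{\mathrm{init}}\times p_x^{\mathrm{data}}(\cdot\mid y)$, the interpolants coincide for all $t\in[0,1]$, equivalently the conditional Gaussian paths $p_t(\cdot\mid x_1,y)$ (and hence the flow-matching loss of Eq.~\eqref{eq:cfm_loss_z}) agree. So I would first state this definition explicitly, before touching any computation.

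With that in hand, I would write the source-shifted interpolant $z_t^{\mathrm{src}} = \beta_t\bigl(x_0+\mu_0(y)\bigr) + \alpha_t x_1$ and the target-shifted interpolant $z_t^{\mathrm{tgt}} = \beta_t x_0 + \alpha_t\bigl(x_1+\mu_1(y)\bigr)$. The terms $\beta_t x_0$ and $\alpha_t x_1$ appear identically on both sides, so requiring equality for \emph{all} $(x_0,x_1)$ collapses to the purely deterministic identity
\[
\beta_t\,\mu_0(y) \;=\; \alpha_t\,\mu_1(y)\qquad\text{for all }t\in[0,1]\text{ and all }y .
\]
I would then evaluate this at the two endpoints using the schedule boundary conditions $\alpha_0=\beta_1=0$ and $\alpha_1=\beta_0=1$: at $t=0$ it reads $\mu_0(y)=0$, and at $t=1$ it reads $\mu_1(y)=0$. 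Since $y$ was arbitrary, $\mu_0\equiv\mu_1\equiv 0$. The converse is immediate: if both shifts vanish, both reparameterizations reduce to the identity and the paths trivially coincide.

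I would close with a short remark separating path-equivalence from the weaker requirement of merely matching the \emph{velocity} fields, which is precisely the intuition the surrounding text warns against: under a linear schedule the ``opposite shift'' $\mu_1(y)=-\mu_0(y)$ does equate the drifts $u_t=\dot\beta_t z_0+\dot\alpha_t z_1$, yet the positions still differ by the nonzero constant offset $z_t^{\mathrm{src}}-z_t^{\mathrm{tgt}}=(\alpha_t+\beta_t)\,\mu_0(y)$, so the paths — and the induced marginals $p_z^{\mathrm{init}}$, $p_z^{\mathrm{data}}$ — are genuinely distinct. An alternative, equally short route I could use instead works directly at the level of these marginals: a source shift sends $p_z^{\mathrm{init}}$ to $\mathcal N(\mu_0(y),I_d)$ while leaving $p_z^{\mathrm{data}}$ untouched, whereas a target shift leaves $p_z^{\mathrm{init}}=\mathcal N(0,I_d)$ and translates $p_z^{\mathrm{data}}$ by $\mu_1(y)$; demanding that both pairs agree forces $\mu_0(y)=0$ from the source side and $\mu_1(y)=0$ from the target side, using only that a genuine probability distribution cannot equal a nontrivial translate of itself.

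The main obstacle here is definitional rather than technical: the statement is true under a path- (or marginal-) level notion of equivalence, but the one-line computation above would be \emph{false} for a velocity-level notion (as the linear-schedule example shows). I would therefore be careful to fix the definition up front; once that is done, no genuine difficulty remains.
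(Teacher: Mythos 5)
Your proof is correct and follows essentially the same route as the paper's: equate the two interpolants, cancel the common terms to obtain $\beta_t\,\mu_0(y)=\alpha_t\,\mu_1(y)$ for all $t$, and conclude $\mu_0=\mu_1=0$ (you make the last step explicit via the boundary conditions at $t=0,1$, where the paper simply appeals to $t$-independence). Your added remarks on the definitional subtlety and the marginal-level alternative are sensible but supplementary; the core argument matches.
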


\begin{proof}[Proof]
\vspace{-4pt}
  With a source‐only shift \((\mu_0,\mu_1)=(\mu_0,0)\), the interpolant is
  \(
  z_t^{(s)} = \beta_t\,(x_0+\mu_0) + \alpha_t\,x_1.
  \)
  With a target shift \((\mu_0,\mu_1)=(0,\mu_1)\), it is
  \(
  z_t^{(t)} = \beta_t\,x_0 + \alpha_t\,(x_1+\mu_1).
  \)
  Equating these gives for \(\forall t, \;
  \beta_t\,\mu_0 = \,\alpha_t\,\mu_1.
  \)
  Since \(\mu_0\) and \(\mu_1\) are \(t\)-independent functions of \(y\), it forces \(\mu_0 = \mu_1 =0\). 
\vspace{-4pt}
\end{proof}

For \textbf{Source‐only}, the interpolant simplifies to \(z_t=x_t+\beta_t\mu_0(y)\). When \(\mu_0(y)\approx\mu_0(y')\) for two conditions \(y,y'\), their paths share a common starting region, simplifying the early‐time flow. For \textbf{Target‐only}, the trajectory becomes \(z_t = x_t + \alpha_t\,\mu_1(y)\). When \(\mu_1(y)\approx\mu_1(y')\), the network only needs to learn a shared ``landing zone'', easing the late‐time flow. For \textbf{Joint}, we have  \(z_t=x_t+\mu_t(y)\) where \(\mu_t(y)=\beta_t\,\mu_0(y) + \alpha_t\mu_1(y)\), so the time‐varying shift aligns both endpoints, minimizing the overall transport distance and reducing the burden on \(v_{\theta}\) throughout the entire trajectory.

Empirically, we find that the joint CAR-Flow variant—allowing both \(\mu_0\) and \(\mu_1\) to adapt—yields the largest improvements in convergence speed and sample fidelity (see Section~\ref{sec:experiments}).
\vspace{-4pt}

\section{Experiments}
\label{sec:experiments}

In this section, we evaluate the efficacy of the Condition‐Aware Reparameterization for Flow Matching (CAR-Flow) under a linear noise schedule  \(\beta_t=1-t\), \(\alpha_t=t\) and compare to classic rectified flow. All experiments are done using the axlearn framework.\footnote{\url{https://github.com/apple/axlearn}}  Detailed implementation settings can be found in Appendix~\ref{app:implementation_details}.

\subsection{Synthetic Data}
\label{sec:syn_data}
For our synthetic-data experiments, we consider a one-dimensional task where the source distribution is \(\mathcal{N}(0,1)\) and the target distribution is a two-class Gaussian mixture, with class A data distribution \(\mathcal{N}(-1.5,0.2^2)\) and class B data distribution \(\mathcal{N}(+1.5,0.2^2)\).  

We encode \(x_t\), \(y\), and \(t\) using sinusoidal embeddings before feeding them to the network. In the baseline rectified-flow model, these embeddings are concatenated and passed through a three-layer MLP (1,993 parameters total) to predict the velocity field \(v_{\theta}(x_t, y, t)\). Training uses the loss in Eq.~\eqref{eq:cfm_loss}, and sampling follows Algorithm~\ref{alg:cfm_sampling}. For CAR-Flow, we augment this backbone with two lightweight linear layers that map the class embedding to the shifts \(\mu_0(y)\) and/or \(\mu_1(y)\), each adding only 9 parameters. In the source-only variant we predict \(\mu_0\) (with \(\mu_1 = 0\)); in the target-only variant we predict \(\mu_1\) (with \(\mu_0 = 0\)); and in the joint variant we predict both. These shifts are applied via Eq.~\eqref{eq:car_shift_only}, training proceeds with the loss in Eq.~\eqref{eq:cfm_loss_z}, and sampling uses Algorithm~\ref{alg:cfm_sampling_z}. All models—baseline and CAR variants—set \(\sigma_t = 0\) (reducing the SDE to an ODE) and employ a 50-step Euler integrator for sampling. To ensure robustness, each configuration is run three times, and we report the average performance, noting that variance across runs is negligible.

\begin{figure}
  \flushleft
  \setlength{\tabcolsep}{0pt}
  \begin{tabular}{@{}c@{}c@{}c@{}c@{}}
  \subcaptionbox{Baseline\label{fig:flow_comparison_1d:a}}[0.24\textwidth]{%
      \includegraphics[height=4.7cm, trim=10 0 5 0, clip]{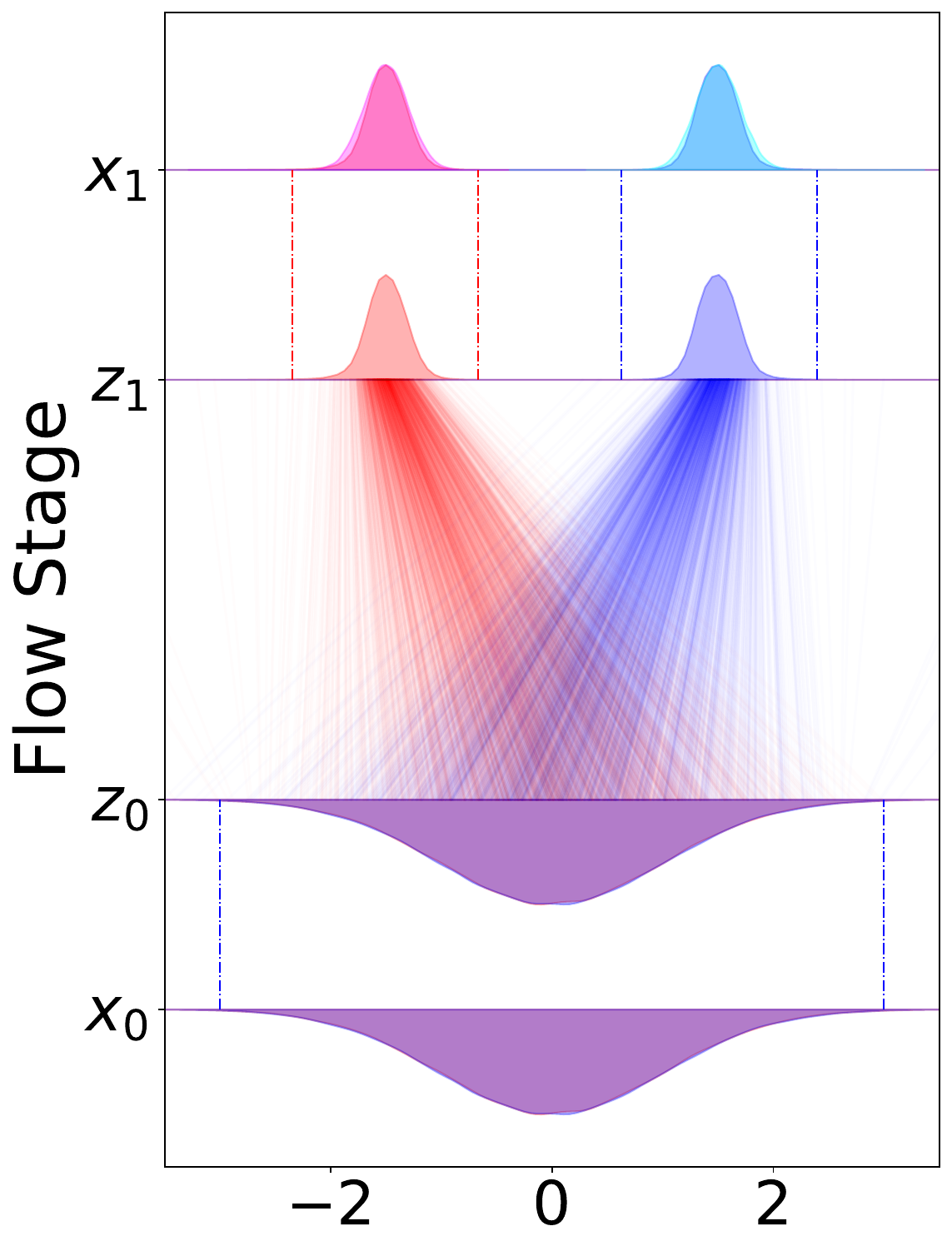}%
    } &
    \subcaptionbox{Source‐only\label{fig:flow_comparison_1d:b}}[0.24\textwidth]{%
      \includegraphics[height=4.7cm, trim=5 0 5 0, clip]{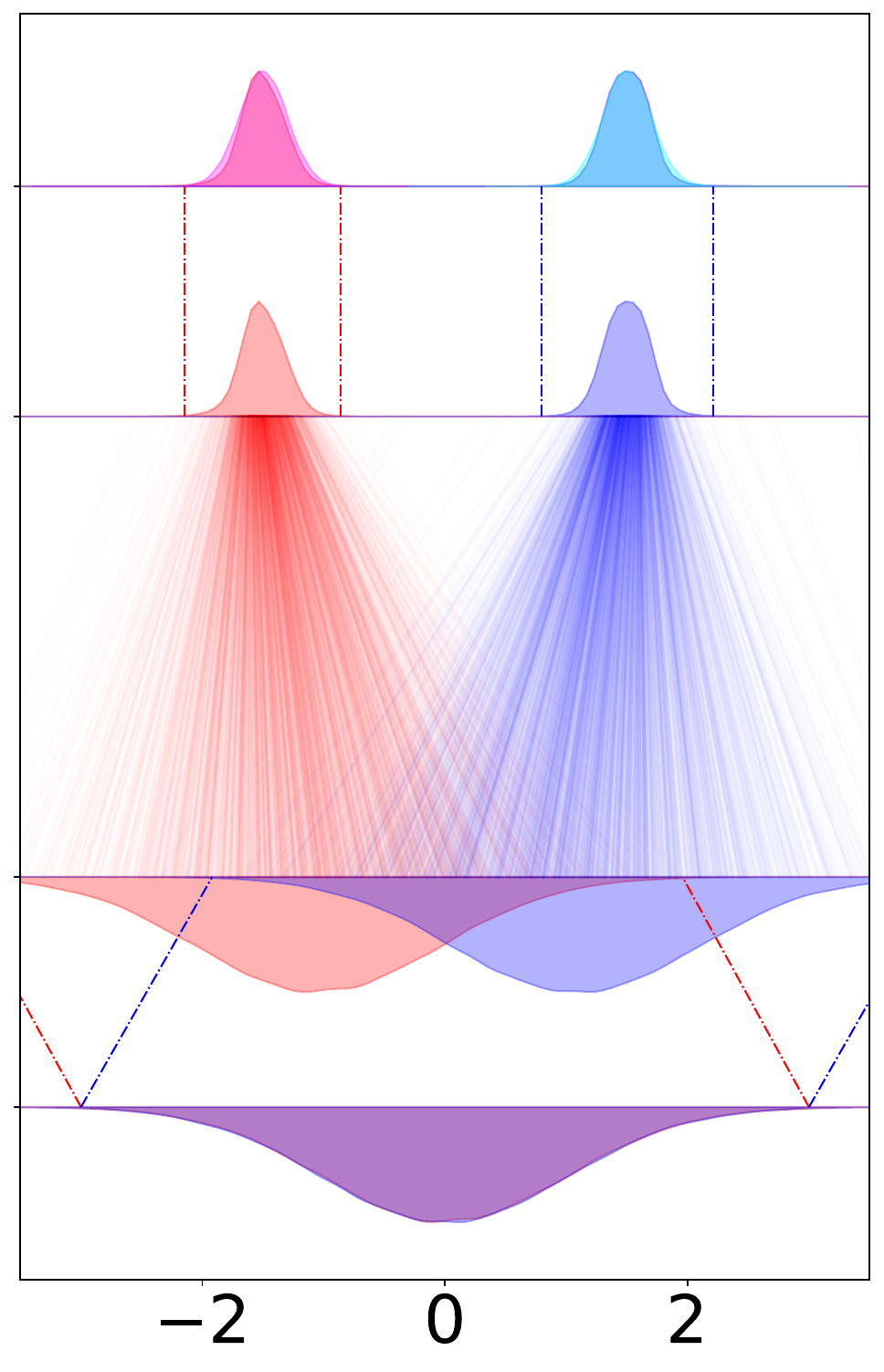}%
    } &
    \subcaptionbox{Target‐only\label{fig:flow_comparison_1d:c}}[0.24\textwidth]{%
      \includegraphics[height=4.7cm, trim=5 0 5 0, clip]{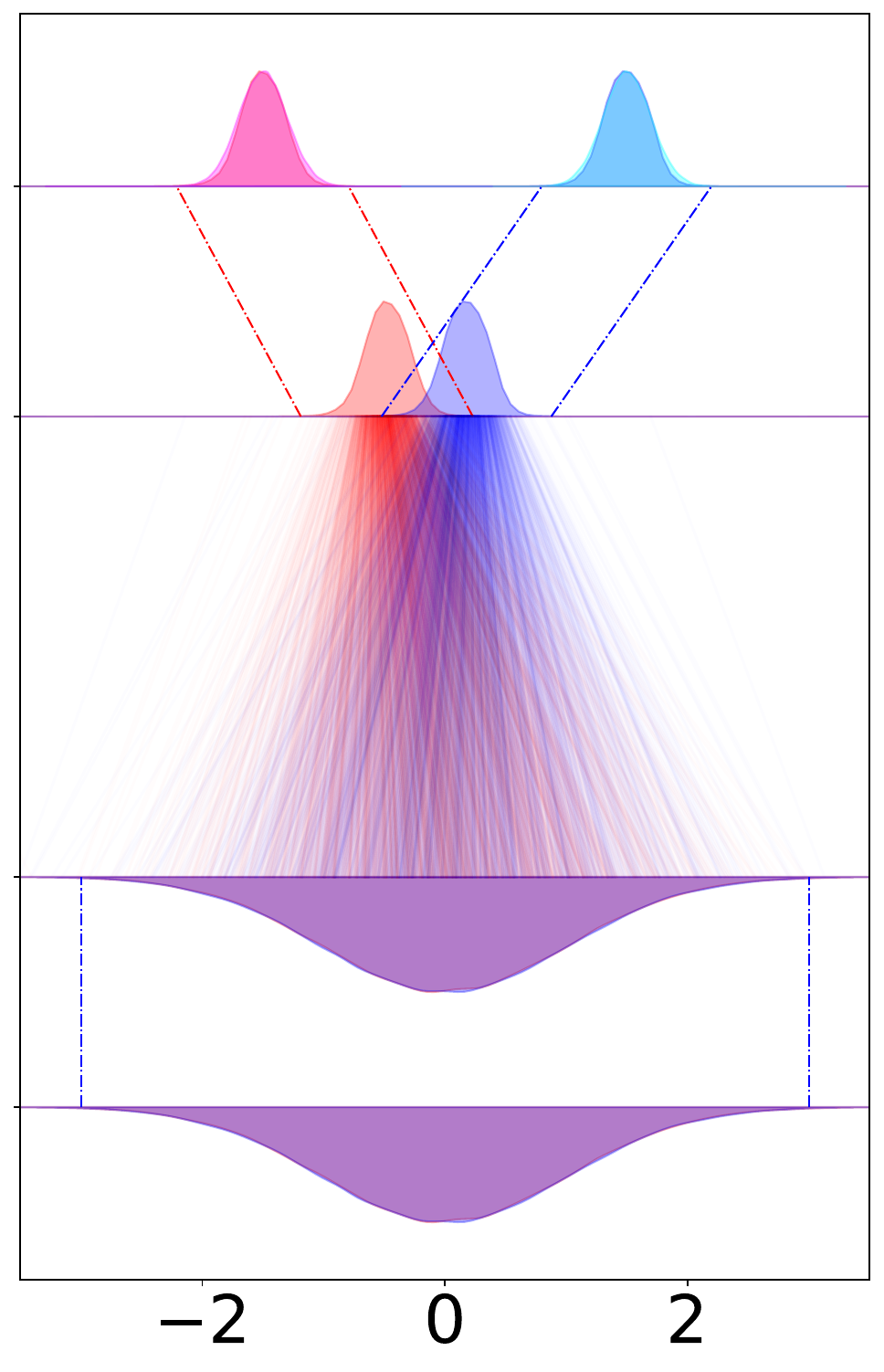}%
    } &
    \subcaptionbox{Joint\label{fig:flow_comparison_1d:d}}[0.24\textwidth]{%
      \includegraphics[height=4.7cm, trim=5 0 5 0, clip]{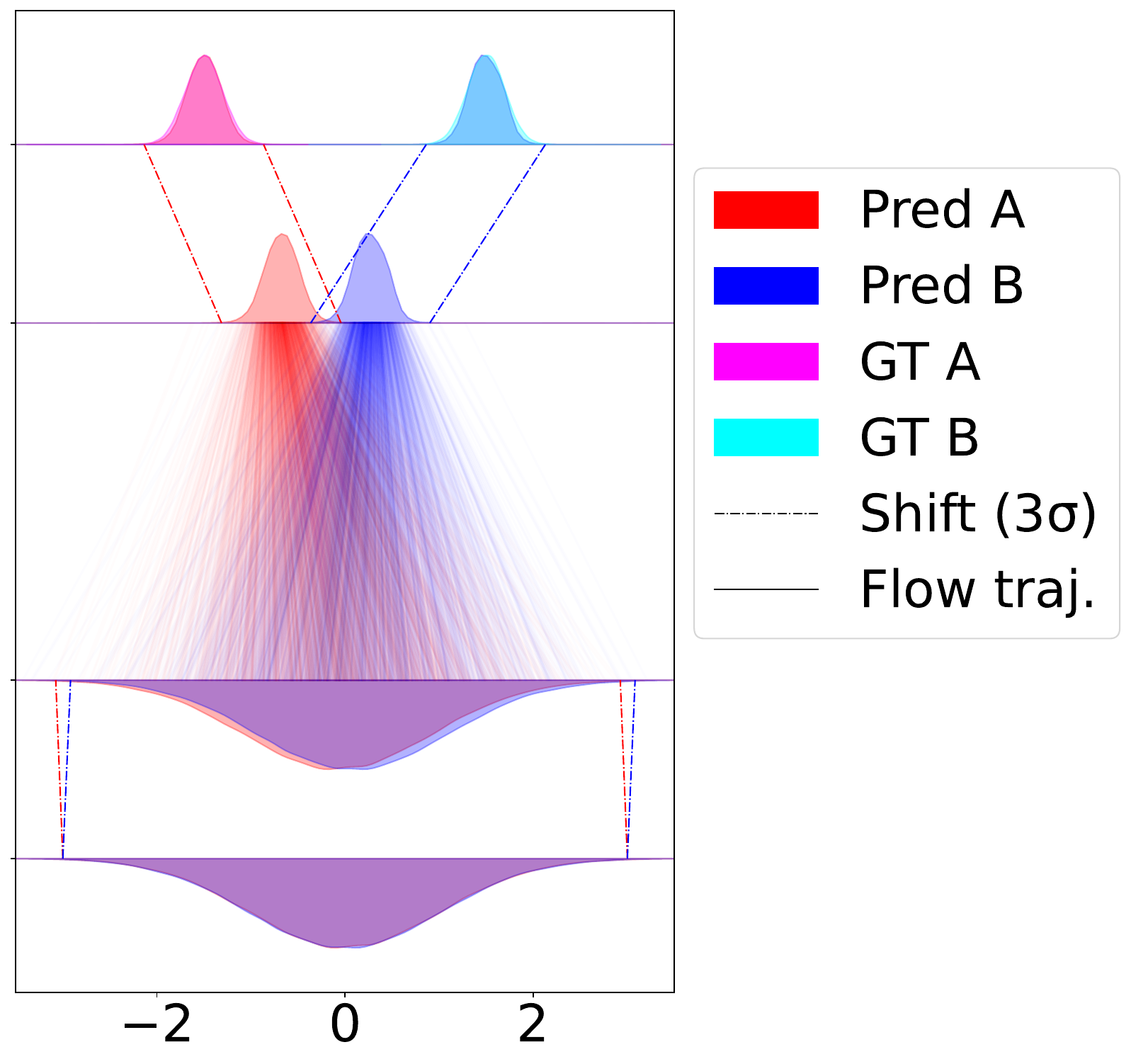}%
    } \\
  \end{tabular}
\caption{
\textbf{Learned flow trajectories on 1D synthetic data.}
Each panel shows trajectories from source \(x_0\) (bottom) to target \(x_1\) (top) for (a) baseline and CAR-Flow variants--(b) source-only, (c) target-only, and (d) joint. Intermediate stages \(z_0\) and \(z_1\) reflect reparameterized coordinates. Colored densities represent predicted and ground-truth class distributions (red/blue: prediction; magenta/cyan: ground truth). Thin lines illustrate individual sample trajectories between \(z_0\) and \(z_1\). Dashed vertical lines mark \(\pm 3\sigma\) for each shift. The source-only CAR-Flow relocates the source distribution per class, while the target-only variant unifies the trajectory endpoints. The joint variant combines both and achieves the best alignment and flow quality.
}
  \label{fig:flow_comparison_1d}
    \vspace{-8pt}
\end{figure}


\begin{table}[t]
\vspace{-4pt}
\centering
\caption{Average trajectory length \(\lVert z_0 \to z_1\rVert\) with 2\(\sigma\) error bounds.}
\label{tbl:trajectory_length}
\sisetup{table-format=1.4, separate-uncertainty}
\begin{tabular}{l
                S[table-format=1.4(4)]
                S[table-format=1.4(4)]
                S[table-format=1.4(4)]
                S[table-format=1.4(4)]}
\toprule &
\multicolumn{1}{c}{\textbf{Baseline}} &
\multicolumn{1}{c}{\shortstack{\textbf{Source-only}\\\textbf{CAR-Flow}}} &
\multicolumn{1}{c}{\shortstack{\textbf{Target-only}\\\textbf{CAR-Flow}}} &
\multicolumn{1}{c}{\shortstack{\textbf{Joint}\\\textbf{CAR-Flow}}} \\
\midrule
Length & 1.5355(24) & 0.7432(19) & 0.7129(10) & 0.7121(11) \\
\bottomrule
\end{tabular}
\vspace{-8pt}
\end{table}

Figure~\ref{fig:flow_comparison_1d} shows how CAR-Flow alters the learned flow. In Figure~\ref{fig:flow_comparison_1d:a}, the baseline must both transport mass and encode class information, yielding the longest paths. The source‐shift variant in Figure~\ref{fig:flow_comparison_1d:b} cleanly relocates each class’s start, the target‐shift variant in Figure~\ref{fig:flow_comparison_1d:c} leaves the source untouched and merges endpoints, and the joint variant in \ref{fig:flow_comparison_1d:d} aligns both start and end with minimal source shift—producing the shortest trajectories (Table~\ref{tbl:trajectory_length}).

\begin{figure}[t]
  \centering
  \begin{subfigure}[b]{0.42\textwidth}
    \centering
    \includegraphics[width=\linewidth]{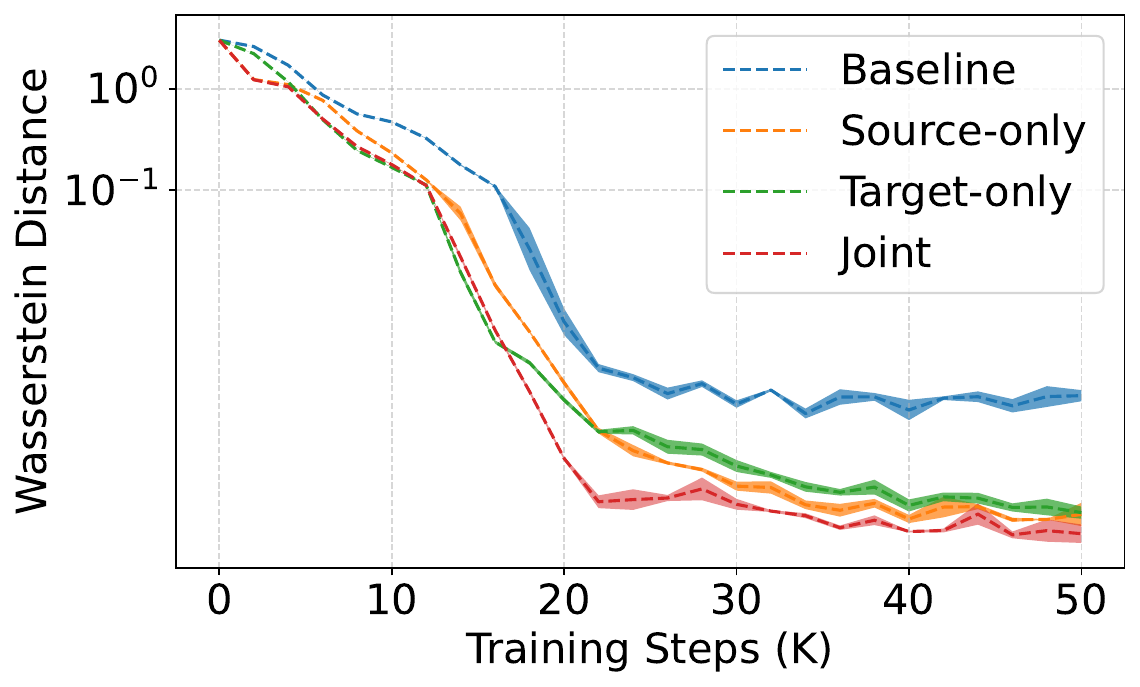}%
    \caption{Wasserstein distance}
    \label{fig:shift_1d:a}
  \end{subfigure}\hfill
  \begin{subfigure}[b]{0.56\textwidth}
    \centering
    \includegraphics[width=\linewidth]{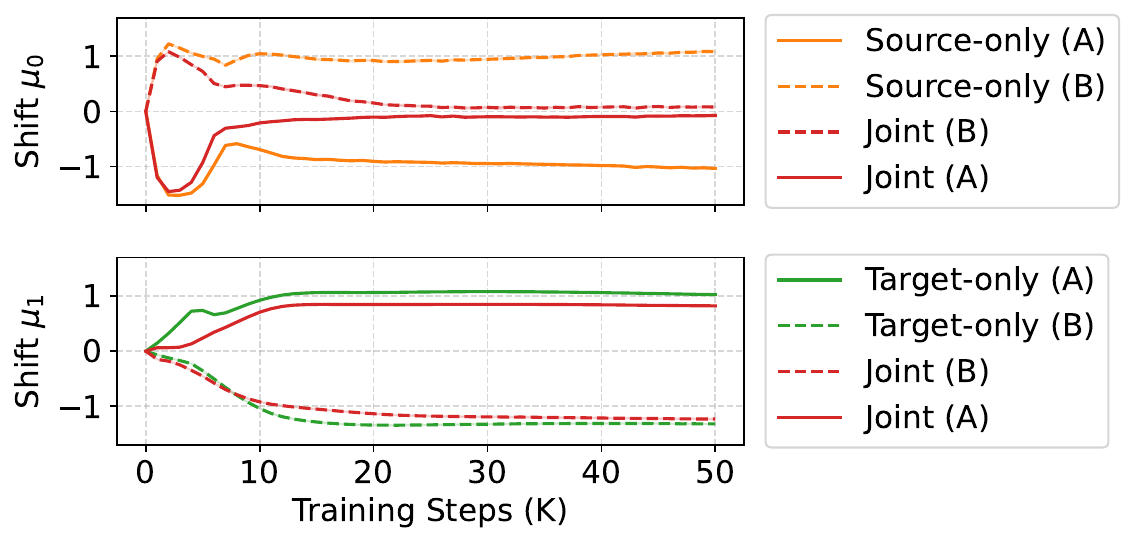}%
    \hspace{-40pt}
    \caption{Learned $\mu_0$ and $\mu_1$ shifts}
    \label{fig:shift_1d:b}
  \end{subfigure}
  \caption{
    \textbf{Comparison of convergence and learned shifts.}
    (a) shows the Wasserstein distance between predicted and ground‐truth distributions in symlog-scale. Joint CAR-Flow achieves both the fastest convergence
    (b) plots the evolution of the learned shifts $\mu_0$ (top) and $\mu_1$ (bottom) for two classes.  
  }
  \label{fig:shift_1d}
    \vspace{-5pt}
\end{figure}

Figure~\ref{fig:shift_1d:a} summarizes convergence measured by Wasserstein distance over training: joint CAR-Flow converges fastest and reaches the lowest error, followed by source and target‐only, all outperforming the baseline. Figure~\ref{fig:shift_1d:b} traces the learned \(\mu_0\) and \(\mu_1\) for each class and variant: joint CAR-Flow yields the most moderate, balanced shifts, explaining its superior convergence and flow quality.

\noindent\textbf{Mode Collapse.} To empirically validate the mode-collapse analysis described in Section~\ref{sec:Mode-Collapse}, we reuse the setup but now allow both shift and scale parameters to be learned simultaneously (Eq.\eqref{eq:car_scale_shift}). We train two separate models: one reparameterizing the source distribution with learned parameters \((\mu_0, \sigma_0)\), and another morphing the target distribution with \((\mu_1, \sigma_1)\). Results are presented in Figure~\ref{fig:mode_collapse_experiments}. Specifically, Figure~\ref{fig:mode_collapse_experiments:a} shows the rapid evolution of the learned standard deviations \(\sigma\), clearly indicating the network quickly discovers a ``shortcut'' solution by shrinking \(\sigma\) to zero. Figure~\ref{fig:mode_collapse_experiments:b} plots the expected norm gap \(\mathbb{E} \lVert v_{\theta} - v_*\rVert^2\), demonstrating convergence to zero, which indicates the network's velocity prediction aligns closely with the analytic \emph{zero-cost} solutions derived in Table~\ref{tab:mode_collapse}. Moreover, unrestricted parameterization of the source distribution triggers mode-collapse case (i) as described in Claim~\ref{clm:collapse}, where the flow degenerates to predicting the class-wise mean of the target (see Figure~\ref{fig:mode_collapse_experiments:c}). Conversely, unrestricted parameterization of the target collapses the distribution nearly to a constant, and consequently, the predicted \(x_1\) degenerates into an improper uniform distribution due to \(\sigma_1 \rightarrow 0\), as illustrated in Figure~\ref{fig:mode_collapse_experiments:d}.

\begin{figure}[t]
  \flushleft
  \setlength{\tabcolsep}{0pt}
  \centering
  \begin{tabular}{@{}p{0.454\textwidth}@{}p{0.55\textwidth}@{}}
    \begin{minipage}{\linewidth}
      \centering
      \begin{subfigure}[t]{\linewidth}
      \setlength{\abovecaptionskip}{-2pt} 
      \setlength{\belowcaptionskip}{-2pt} 
        \centering
        \includegraphics[height=2.2cm,trim=5 0 5 0,clip]{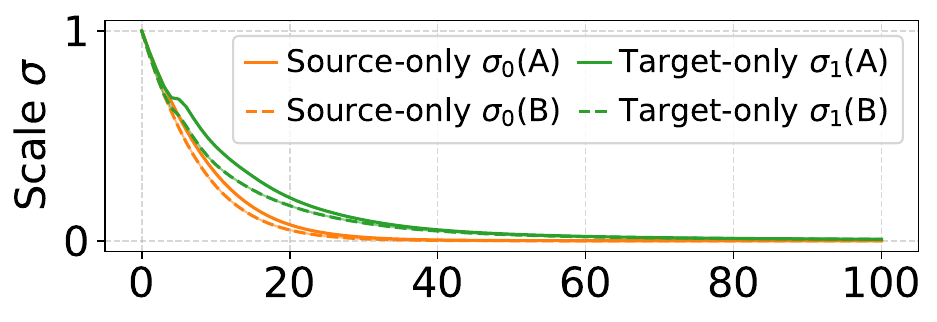}
        \caption{Learned scale $\sigma$}
        \label{fig:mode_collapse_experiments:a}
      \end{subfigure}%
      \vspace{2pt}
      \begin{subfigure}[t]{\linewidth}
      \setlength{\abovecaptionskip}{-1pt} 
      \setlength{\belowcaptionskip}{-1pt} 
        \centering
        \includegraphics[height=2.5cm,trim=5 0 5 0,clip]{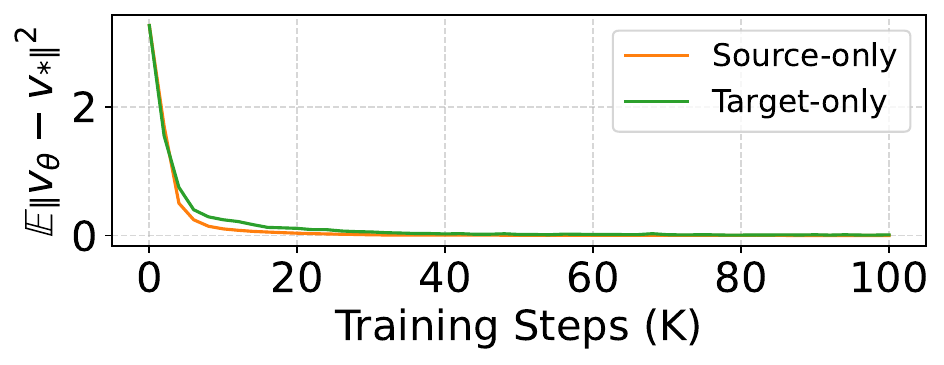}
        \caption{Validation error}
        \label{fig:mode_collapse_experiments:b}
      \end{subfigure}
    \end{minipage}
    &
    \begin{minipage}{\linewidth}
      \centering
      \begin{subfigure}[t]{0.48\linewidth}
        \centering
        \includegraphics[height=4.7cm,trim=5 0 5 0,clip]{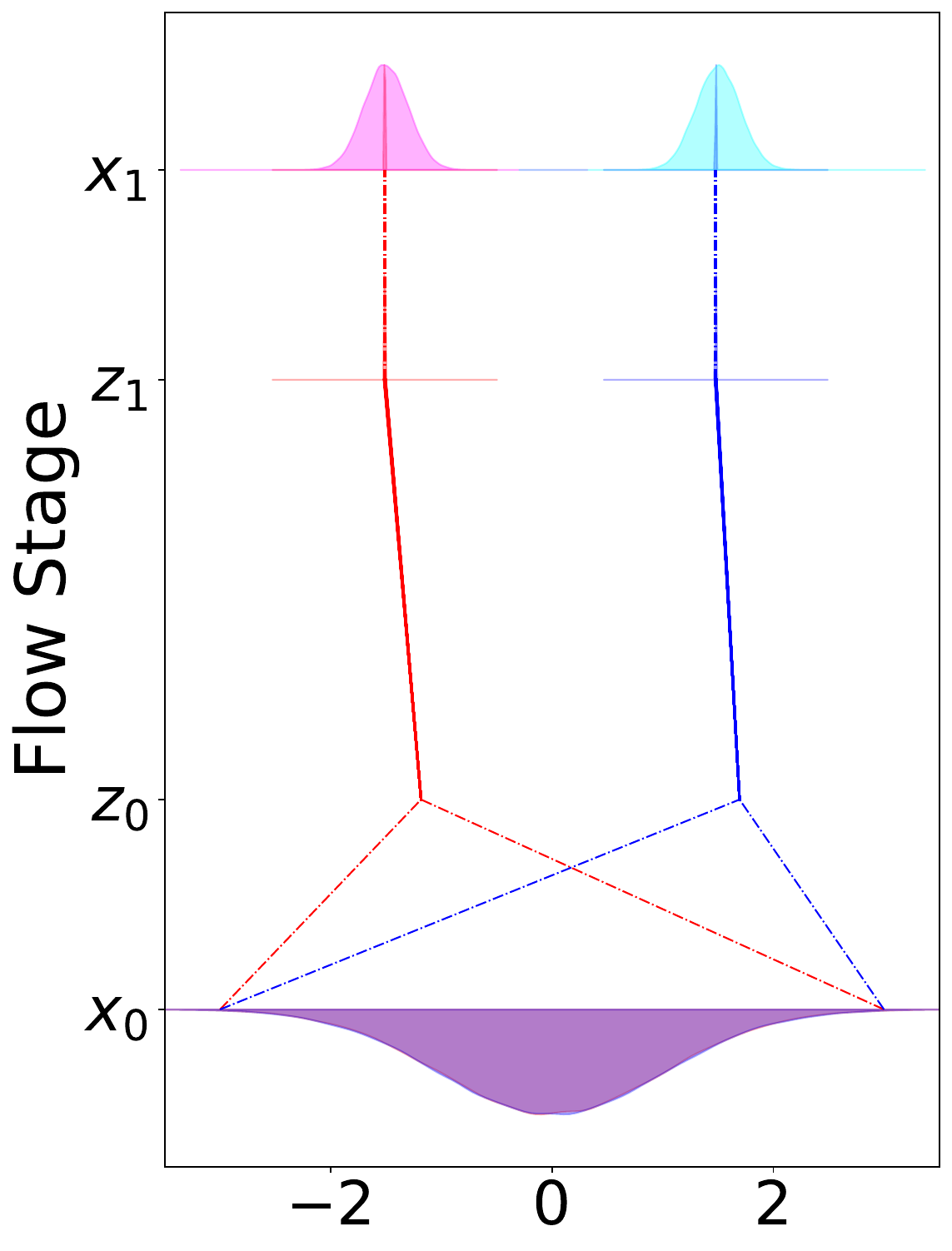}
        \captionsetup{margin={5mm,0mm}}
        \caption{Source‐only}
        \label{fig:mode_collapse_experiments:c}
      \end{subfigure}%
      \hfill
      \begin{subfigure}[t]{0.48\linewidth}
        \centering
        \includegraphics[height=4.7cm,trim=5 0 5 0,clip]{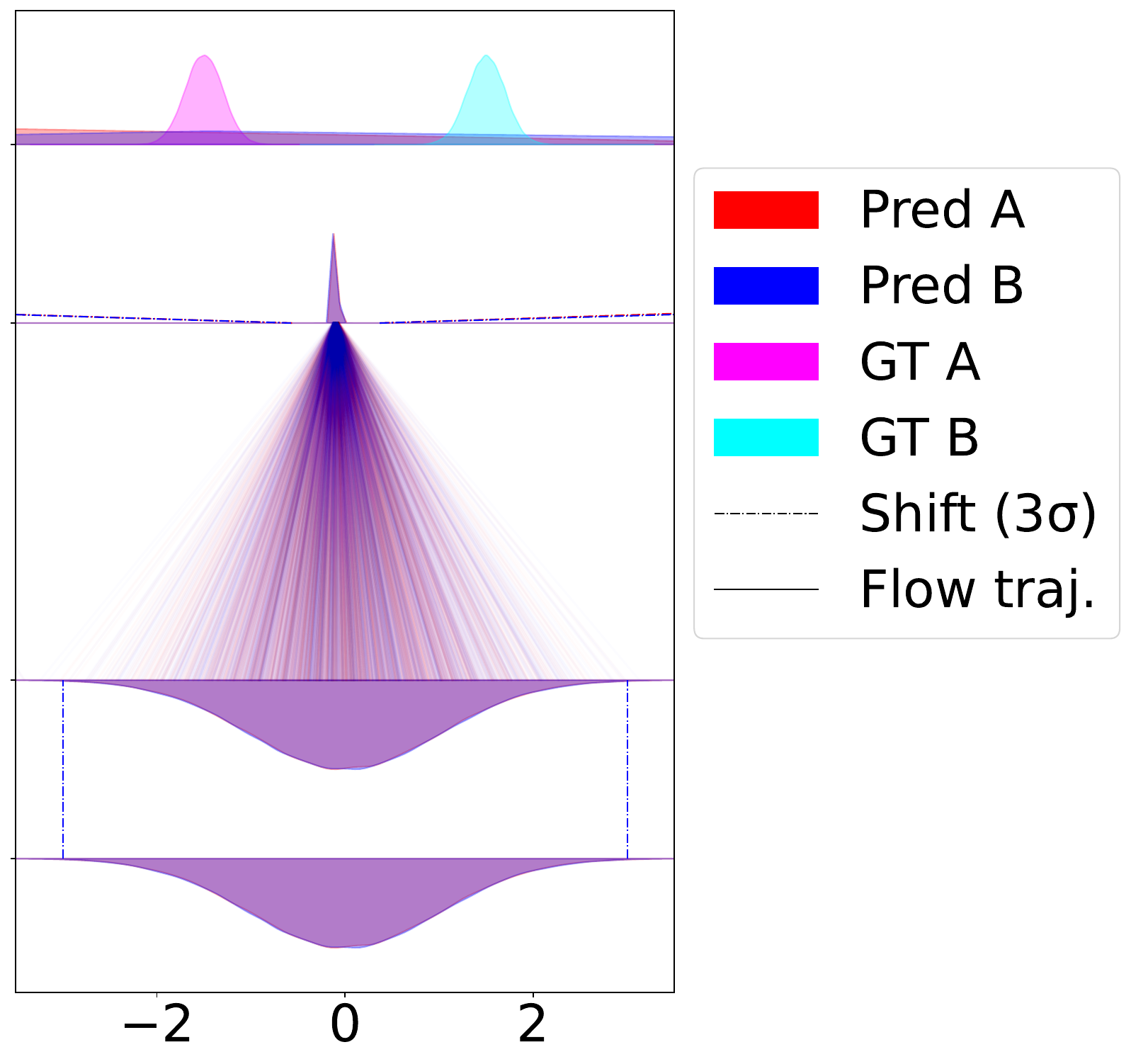}
        \captionsetup{margin={-5mm,0mm}}
        \caption{Target‐only}
        \label{fig:mode_collapse_experiments:d}
      \end{subfigure}
    \end{minipage}
  \end{tabular}
  \caption{
    \textbf{Mode collapse diagnostics with scale reparameterization.}
    (a)–(b) Evolution of learned $\sigma$ and validation error.
    (c)–(d) Learned flows when allowing shift+scale on source vs.\ target.
  }
  \label{fig:mode_collapse_experiments}
\end{figure}

\subsection{ImageNet}

\begin{table}[t]
\centering
\caption{Benchmarking class-conditional image generation on ImageNet $256\times256$. All CAR-Flow variants surpass the SiT-XL/2 baseline.}
\renewcommand{\arraystretch}{1.2} 
\setlength{\tabcolsep}{1pt} 
\begin{tabular}{lccccccc}
\toprule
\textbf{Method} & \textbf{Params (M)} & \textbf{Training Steps} & \textbf{FID} $\downarrow$ & \textbf{IS} $\uparrow$ & \textbf{sFID} $\downarrow$ & \textbf{Precision} $\uparrow$ & \textbf{Recall} $\uparrow$ \\
\midrule
SiT-XL/2 & 675   & 400K & 17.28  &  78.88  &  5.71  &    0.67  &  0.61  \\
\(\text{SiT-XL/2}_{\text{CAR-Flow Source-only}}\)  & 677  & 400K & 15.15   &       83.42   &  5.56  &   0.68 &  0.61 \\
\(\text{SiT-XL/2}_{\text{CAR-Flow Target-only}}\)  & 677 & 400K & 14.44    &      85.66   &   5.59  &  0.68  &   0.61 \\
\(\text{SiT-XL/2}_{\text{CAR-Flow Joint}}\)  & 679 & 400K & \textbf{13.91} &  \textbf{87.96} &  \textbf{5.38} &  \textbf{0.68} &  \textbf{0.62}  \\
\midrule
\(\text{SiT-XL/2}_{\text{cfg=1.5}}\) &  675   & 7M & 2.07 & 280.2 & 4.46 & 0.81 & 0.58 \\
\(\text{SiT-XL/2}_{\text{CAR-Flow Joint+cfg=1.5}}\)  & 679 & 7M & \textbf{1.68} &  \textbf{304.0} &  \textbf{4.34} &  \textbf{0.82} &  \textbf{0.62}  \\
\bottomrule
\end{tabular}%
\label{tab:imagenet}
\end{table}

To benchmark on a high-dimensional, large-scale dataset, we conduct experiments on ImageNet \(256\times256\) data using v6e-256 TPUs. Our baseline is SiT-XL/2~\citep{ma2024sit}, re-implemented in JAX~\citep{jax2018github}; we strictly follow the original training recipe from the open-source SiT repository to replicate the results reported in the paper. For our CAR variants, we apply the shift-only reparameterization to the \textit{sd-vae-ft-ema} VAE backbone used by SiT (see Eq.~\eqref{eq:car_shift_only}-~\eqref{eq:vae_car_shift}). We introduce two lightweight convolutional networks (\(\approx 2.3M\) parameters each) to predict \(\mu_0\) and \(\mu_1\) from the class embeddings, projecting them into the latent space. All models are sampled using the Heun SDE solver with 250 NFEs.

Table~\ref{tab:imagenet} presents the quantitative results. Augmenting SiT-XL/2 with CAR-Flow consistently outperforms the baseline across all variants. In particular, the joint-shift variant achieves the best result, reducing FID from 2.07 to 1.68 while adding fewer than \(0.6\% \) parameters. These results underscore the importance of explicitly conditioning the source and target distributions: simple shift reparameterization not only improves sample fidelity but does so with minimal computational overhead, facilitating easy integration into existing large‐scale generative frameworks.

Table~\ref{tab:imagenet} presents the quantitative results. Augmenting SiT-XL/2 with CAR-Flow consistently outperforms the baseline across all variants. In particular, the joint-shift variant achieves the best result, reducing FID from 2.07 to 1.68 while adding fewer than \(0.6\%\) parameters. Beyond final performance, CAR-Flow also accelerates optimization: our convergence analysis on ImageNet-256 shows that all CAR-Flow variants consistently reduce FID faster than the baseline across training steps (see Fig.~\ref{fig:imagenet_convergence}). These results underscore the importance of explicitly conditioning the source and target distributions: simple shift reparameterization not only improves sample fidelity but does so with minimal computational overhead, facilitating easy integration into existing large-scale generative frameworks.

\begin{figure}[t]
\centering
\includegraphics[width=0.8\linewidth]{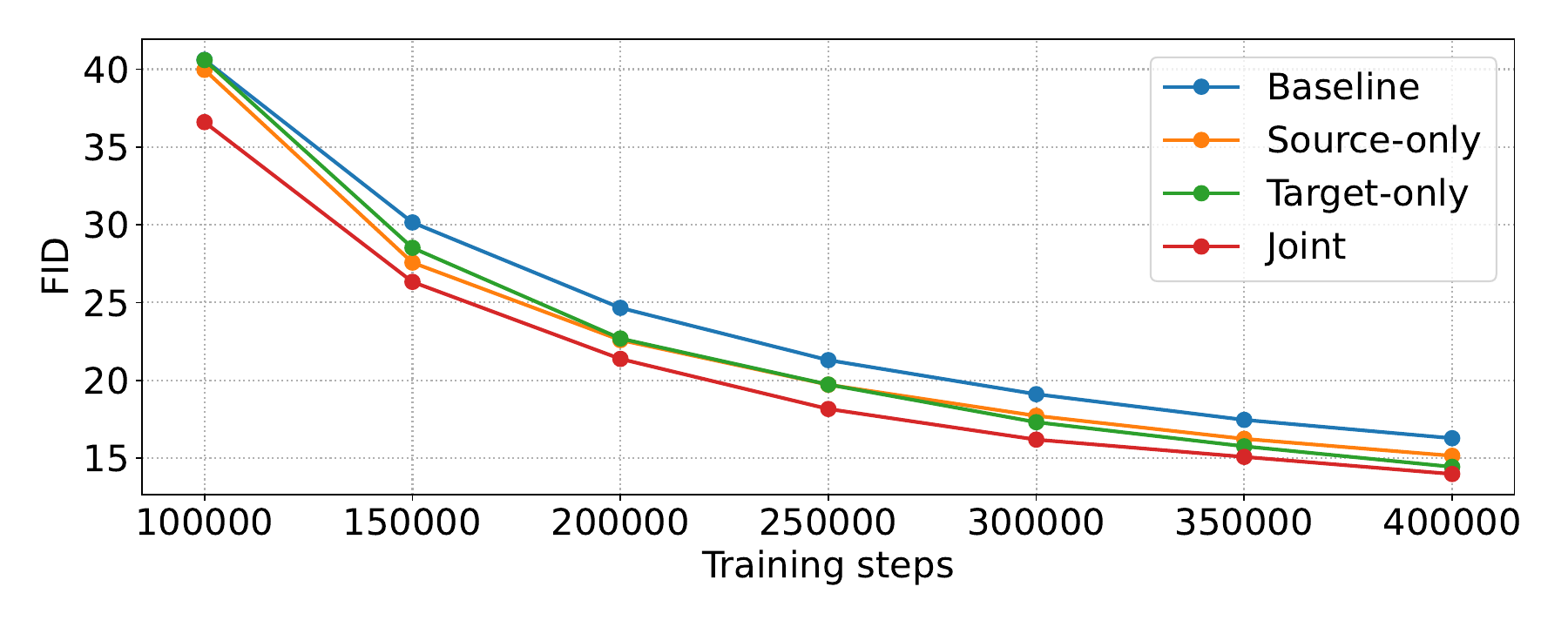}
\caption{Convergence on ImageNet $256\times256$: FID vs.\ training steps. CAR-Flow variants consistently converge faster than the baseline.
}
\label{fig:imagenet_convergence}
\vspace{-10pt}
\end{figure}

\section{Related Work}
\label{sec:rel}

Generative modeling has advanced significantly in the last decade from variational auto-encoders (VAEs)~\citep{KingmaICLR2014}, Generative Adversarial Nets~\citep{goodfellow2014generative}, and normalizing flows~\citep{rezende2015variational}. More recently, score matching~\citep{song2019generative,song2020score}, diffusion models~\citep{ho2020denoising}, and flow matching~\citep{liu2023flow,LipmanICLR2023,albergo2023building,albergo2023stochastic} was introduced. The latter three frameworks are related: sampling at test-time can be viewed as numerically solving a transport (ordinary) differential equation by integrating along a learned velocity field from the source distribution at time zero to the target distribution at time one.

For learning the velocity field, various approaches to interpolate between samples from the source distribution and the target distribution have been discussed~\citep{LipmanICLR2023,liu2023flow,tong2023improving}. Among those, rectified flow matching was shown to lead to compelling results on large-scale data~\citep{ma2024sit,esser2024scaling}.

For use on large-scale data, flow matching is typically formulated in latent space by compressing data via the encoder of a pre-trained and frozen VAE~\citep{rombach2022high}. These mappings differ from the discussed CAR-Flow, as they are typically independent of the conditioning variable. As mentioned before, CAR-Flow can be applied on top of pre-trained and frozen projections on the latent space.

More recently, \citet{yu2025representationalignmentgenerationtraining,yao2025reconstructionvsgenerationtaming} proposed to align representations within deep nets that model the velocity to visual representations from vision foundation models. Specifically, \citet{yu2025representationalignmentgenerationtraining} align early layer features of DiT~\citep{peebles2023scalable} and SiT~\citep{ma2024sit} models with representations extracted from DINOv2~\citep{oquab2024dinov2learningrobustvisual} and CLIP~\citep{radford2021learningtransferablevisualmodels}. In contrast,~\citet{yao2025reconstructionvsgenerationtaming} aligns the latent space of a VAE with representations from pre-trained vision foundation models, which are then frozen for diffusion model training. These approaches differ from our approach, which learns to transform the source and target distributions rather than encouraging feature alignment.

Most related to our work is the recently introduced REPA-E~\citep{leng2025repaeunlockingvaeendtoend}. In REPA-E, \citet{leng2025repaeunlockingvaeendtoend} study end-to-end training of diffusion models and VAE encoders/decoders, which map data to/from a latent space. Differently, in this paper, we formalize failure modes reported by \citet{leng2025repaeunlockingvaeendtoend} and identify them as trivial solutions that arise when jointly training a flow matching model and a target distribution mapping. We further introduce a source distribution mapping. Finally, we impose simple restrictions that preclude those trivial solutions.

\section{Conclusion}
\label{sec:conclusion}
We propose and study condition-aware reparameterization for flow matching (CAR-Flow), which aligns the source and target distributions in flow-matching models. We find CAR-Flow to alleviate the burden of classic flow-matching, where a model simultaneously transports probability mass to the correct region of the data manifold, while also encoding the semantic meaning of the condition. 

\noindent\textbf{Limitations and broader impact.} This work characterizes the failure modes when jointly training a flow matching model, a source distribution mapping, and a target distribution mapping, and identifies them as trivial solutions of the objective. We further study the simplest effective approach to avoid these trivial solutions. While we find this simple approach to lead to compelling results, we think more general mappings will likely improve results even further. We leave the identification of more general suitable mappings to future work. 

Improving the expressivity of generative models has a significant broader impact. On the positive side, modeling complex distributions more easily saves resources and enables novel applications. On the negative side, efficient generative modeling can be abused to spread misinformation more easily.
\section*{Acknowledgements}

We begin by thanking Byeongjoo Ahn, Wenze Hu, Zhe Gan, and Jason Ren for their thoughtful discussions and rapid feedback, which significantly shaped this study. We also acknowledge the Apple Foundation Model team for providing the critical infrastructure that powered our experiments Finally, we are profoundly appreciative of Ruoming Pang and Yang Zhao for their strategic vision, steadfast guidance, and unwavering encouragement throughout this project.

\clearpage
\bibliographystyle{abbrvnat}
\bibliography{main,main2}


\appendix

\clearpage
\section*{Appendix --- CAR-Flow: Condition-Aware Reparameterization Aligns Source and Target for Better Flow Matching
}

The appendix is organized as follows:
\begin{itemize}
    \item In Section~\ref{app:score_function_under_reparameterization} we derive the effects of our reparameterization on the score function. This is important for correct sampling, particularly when using an SDE solver.
    \item In Section~\ref{app:proof_claim1} we prove Claim~\ref{clm:collapse}.
    \item In Section~\ref{app:implementation_details} we provide additional implementation details for both synthetic and ImageNet experiments.
    \item In Section~\ref{app:extra_exp} we discuss some findings.
    \item In Section~\ref{app:qual} we illustrate additional qualitative results.
\end{itemize}

\section{Score Function Under Reparameterization}
\label{app:score_function_under_reparameterization}

In this section we show that  
\begin{enumerate}
\item The conditional score \(s_t=\nabla\!\log p(z_t\!\mid y)\) generally \emph{changes}, once the source-space map \(f\) is applied, unless \(f(x,y)=x\);

\item For the practically important \emph{shift-only} map \(f(x,y)=x+\mu_0(y)\) under a Gaussian path, both the density and the score admit closed forms; and  

\item The score couples to the drift (velocity field) \(u_t\) that appears in the SDE used for sampling.
\end{enumerate}

\subsection{Change of the Conditional Score}
\label{app:cond_score}

Recall that the source transform is \(f \colon \mathbb{R}^n \times \mathcal{Y} \rightarrow \mathbb{R}^m, (x,y)\mapsto z\). Let
\(J_f(x,y)\in\mathbb{R}^{m\times n}\) denote its Jacobian.
By the change-of-variables formula,
\begin{equation}
p_X(x\mid y)=p_Z\!\bigl(f(x,y)\mid y\bigr)\,
            \bigl|\det J_f(x,y)\bigr|.
\label{eq:pushforward-density}
\end{equation}
Taking \(\nabla_x\log\) of Eq.~\eqref{eq:pushforward-density} gives
\begin{equation}
\boxed{
\nabla_x\log p_X(x\mid y)
   = J_f(x,y)^{\!\top}\,
     \underbrace{\nabla_z\log p_Z\!\bigl(f(x,y)\mid y\bigr)}_{\text{score in }z\text{-space}}
   + \nabla_x\log\bigl|\det J_f(x,y)\bigr|
}\;.
\label{eq:x-score-from-z}
\end{equation}

The first term transports the \(z\)-space score back to the
\(x\)-space tangent via the Jacobian transpose; the second term corrects
for the local volume change introduced by \(f\).

\paragraph{Volume-preserving maps.}
If \(f\) is volume preserving, \(\det J_f\equiv\pm1\) and hence
\(\nabla_x\log|\det J_f|=0\).  Eq.~\eqref{eq:x-score-from-z} then reduces to
\(\nabla_x\log p_X = J_f^{\!\top}\nabla_z\log p_Z\).
In the special case of \(f(x,y)=x\), one has \(J_f=I\) and therefore
\(\nabla_x\log p_X=\nabla_x\log p_Z\), recovering the classical setting
\emph{without} reparameterization.

\subsection{Shift-only Transform}
\label{app:shift-only}

Assume the initial distribution \(p_x^{\mathrm{init}}=\mathcal{N}(0,I_d)\) is
standard Gaussian and \(f\) is a shift-only map, i.e., \(f(x,y)=x+\mu_0(y)\).
Along a Gaussian path parameterized by \(\alpha_t,\beta_t\) (with boundary conditions \(\alpha_0=0,\beta_0=1\) and \(\alpha_1=1,\beta_1=0\)), the conditional density at time \(t\) is
\begin{equation}
p_t(\cdot \mid z_1, y) = \mathcal{N}\bigl(\alpha_t z_1 + \beta_t \mu_0,
\beta_t^2 I_d \mid y \bigr),
\end{equation}

with both endpoints being
\begin{equation}
p_0(\cdot \mid z_1, y) = \mathcal{N}\bigl(\mu_0, I_d \mid y \bigr),
\qquad
p_1(\cdot \mid z_1, y) = \delta_{z_1 \mid y}.
\end{equation}

Consequently, by using the form of the Gaussian probability density, the (conditional) score reads

\begin{equation}
s_t(z_t\mid z_1,y)
  \;=\;\nabla_{z_t}\log p_t(z_t\mid z_1,y)
  \;=\;\frac{\alpha_t z_1+\beta_t \mu_0(y)-z_t}{\beta_t^{\,2}}.
\label{eq:cond_score}
\end{equation}

\subsection{Link Between Score and Velocity Field}
\label{app:score-velocity-link}

Fix a conditioning label \(y\) and a pair \((z_0,z_1)\) drawn from the Gaussian endpoint distributions introduced in Section.~\ref{sec:general}. The flow is defined as
\begin{equation}
\psi_t(z_0\mid z_1,y)\;=\;\beta_t\,z_0+\alpha_t\,z_1,
\qquad
0\le t\le1,
\label{eq:def-linear-flow}
\end{equation}
so that \(\psi_0=z_0\) and \(\psi_1=z_1\).
Because \(\psi_t\) satisfies the ODE
\(
\mathrm d\psi_t/\mathrm dt=u_t\bigl(\psi_t\mid z_1,y\bigr),
\)
differentiating Eq.~\eqref{eq:def-linear-flow} in \(t\) gives the \emph{conditional} velocity field
\begin{equation}
u_t(\alpha_t z_1+\beta_t z_0\mid z_1,y)
   \;=\;
   \dot\beta_t\,z_0+\dot\alpha_t\,z_1.
\label{eq:cond-velocity}
\end{equation}

Since \(z_t=\psi_t(z_0\mid z_1,y)\), \(z_0=(z_t-\alpha_t z_1)/\beta_t\),
substituting these into Eq.~\eqref{eq:cond-velocity} yields
\begin{equation}
u_t(z_t\mid z_1,y)
  \;=\;
  \dot{\beta_t} \left( \frac{z_t - \alpha_tz_1}{\beta_t} \right) +
  \dot\alpha_t\,z_1.
\label{eq:u-cond-z1}
\end{equation}

Eq.~\eqref{eq:cond_score} links \(z_1\) and the \emph{conditional} score \(s_t(z_t\!\mid\!z_1,y)\).
Solving Eq.~\eqref{eq:cond_score} for \(z_1\) and inserting the result into Eq.~\eqref{eq:u-cond-z1} yields
\begin{equation}
u_t(z_t \mid z_1, y) = \frac{\dot{\alpha_t}}{\alpha_t} z_t + \left( \beta_t^2 \frac{\dot{\alpha_t}}{\alpha_t} - \dot{\beta_t}\beta_t  \right) \left( \underbrace{\frac{\alpha_t z_1 + \beta_t \mu_0(y) - z_t}{\beta_t^2}}_{s_t(z_t \mid z_1, y) } - \underbrace{\frac{\mu_0(y)}{\beta_t}}_{\text{bias correction}}
 \right).
\label{eq:u-vs-s-cond}
\end{equation}

The first term under the brace is precisely the \emph{conditional score} from Eq.~\eqref{eq:cond_score}, while the second term compensates for the mean shift \(\mu_0(y)\) introduced by the shift-only transformation.

Eq.~\eqref{eq:u-vs-s-cond} can be rearranged to express the score through the velocity:
\begin{equation}
s_t(z_t\!\mid\!z_1,y)
   =\frac{\alpha_t\,u_t(z_t\!\mid\!z_1,y)-\dot\alpha_t\,z_t}
          {\beta_t^{\,2}\dot\alpha_t-\alpha_t\dot\beta_t\beta_t}
     +\frac{\mu_0(y)}{\beta_t}.
\label{eq:score-velocity-cond}
\end{equation}

To translate the conditional identity given in Eq.~\eqref{eq:score-velocity-cond} to the \emph{marginal} setting used at inference time, we integrate over the target endpoint \(z_1\). For this we introduce the posterior
\[
q_t(z_1\mid z_t,y)
   \;:=\;
   \frac{p(z_t\mid z_1,y)\,p^{\text{data}}_z(z_1\mid y)}
        {p_t(z_t\mid y)},
\qquad\text{with}\qquad
\int q_t(z_1\mid z_t,y)\,\mathrm dz_1=1 .
\]
The marginal velocity and score are simply expectations under this density, i.e.,
\[
u_t(z_t\mid y)=\mathbb{E}_{q_t}\!\bigl[u_t(z_t\mid z_1,y)\bigr],
\qquad
s_t(z_t\mid y)=\mathbb{E}_{q_t}\!\bigl[s_t(z_t\mid z_1,y)\bigr].
\]
\noindent
Applying Eq.~\eqref{eq:score-velocity-cond} inside the expectation and using linearity yields
\begin{equation}
\boxed{%
s_t(z_t\mid y)
  \;=\;
  \frac{\alpha_t\,u_t(z_t\mid y)-\dot\alpha_t\,z_t}
       {\beta_t^{\,2}\dot\alpha_t-\alpha_t\dot\beta_t\beta_t}
  \;+\;
  \frac{\mu_0(y)}{\beta_t}}\; ,
\label{eq:marginal-score-velocity-final}
\end{equation}
the exact coupling used in the latent-space SDE (Eq.\,\ref{eq:z_sde}) to express the score with the drift during sampling.

\section{Proof of Claim~\ref{clm:collapse}}
\label{app:proof_claim1}
In this section we prove Claim~\ref{clm:collapse} in two steps:  (i) We show that \emph{any} parameter choice driving the objective in Eq.~\eqref{eq:cfm_loss_z} to its global minimum forces the velocity field to be affine in the interpolant,  \(v_{\theta}(z_t,t,y)=\gamma(t,y)\,z_t+\eta(t,y)\).  
(ii) For each collapse pattern (i)–(v) in the claim, we verify that the loss indeed attains this trivial minimum and we state the resulting \(\gamma(t,y)\) and \(\eta(t,y)\) in closed form.

\vspace{0.3em}
\noindent\textbf{Step 1.  Affine Form at Zero Loss}

Let \(z_t=\beta_t z_0+\alpha_t z_1\).
If the loss in Eq.~\eqref{eq:cfm_loss_z} vanishes almost surely, the integrand must be
identically zero:
\begin{equation}
v_{\theta}(z_t,t,y)=\dot\beta_t\,z_0+\dot\alpha_t\,z_1
\quad
\forall(z_0,z_1,t,y).
\label{eq:pointwise-id}
\end{equation}
Fix \((t,y)\) and apply the chain rule w.r.t.\ \(z_0\) and \(z_1\):
\[
\frac{\partial v_{\theta}}{\partial z_t}\,\beta_t=\dot\beta_t,
\qquad
\frac{\partial v_{\theta}}{\partial z_t}\,\alpha_t=\dot\alpha_t.
\]
The right–hand sides are constant in \((z_0,z_1)\), hence
\(\partial v_{\theta}/\partial z_t\equiv\gamma(t,y)\) does not depend on \(z_t\).
Integrating once in \(z_t\) gives the \emph{affine} form
\begin{equation}
v_{\theta}(z_t,t,y)=\gamma(t,y)\,z_t+\eta(t,y),
\label{eq:affine}
\end{equation}
with \(\eta(t,y)\) an integration constant.  Thus any zero-loss solution
necessarily has the affine form given in Eq.~\eqref{eq:affine} and is already
independent of~\(\theta\).  This completes Step 1.

\vspace{0.3em}
\noindent\textbf{Step 2.  Each Collapse Pattern Attains the Zero–loss Affine Field}

Write \(z_0:=f(x_0,y)\) and \(z_1:=g(x_1,y)\).
Insert the ansatz given in Eq.~\eqref{eq:affine} into the pointwise identity given in Eq.~\eqref{eq:pointwise-id} and substitute
\(z_t=\beta_t z_0+\alpha_t z_1\):
\begin{equation}
\gamma(t,y)\bigl[\beta_t z_0+\alpha_t z_1\bigr]+\eta(t,y)
    =\dot\beta_t\,z_0+\dot\alpha_t\,z_1.
\label{eq:coeff-match}
\end{equation}
Eq.~\eqref{eq:coeff-match} is linear in \((z_0,z_1)\); each collapse
scenario reduces it to one or two scalar conditions, from which
\(\gamma\) and \(\eta\) are obtained explicitly.

\begin{enumerate}[label=(\roman*),wide,topsep=1pt,itemsep=3pt]

\item \textbf{Constant source.}  
      \(z_0\equiv c(y)\) is fixed while \(z_1\) varies freely.  
      Matching the \(z_1\)-coefficient in Eq.~\eqref{eq:coeff-match} forces
      \(\gamma=\dot\alpha_t / \alpha_t\).  
      The remaining scalar equation fixes\;
      \(\eta=c(y)\,(\dot\beta_t-\gamma\beta_t)\).

\item \textbf{Constant target.}  
      Symmetric to case (i): \(\gamma=\dot\beta_t/\beta_t\) and
      \(\eta=c(y)\,(\dot\alpha_t-\gamma\alpha_t)\).

\item \textbf{Unbounded source scale.}  
      As \(\lVert z_0\rVert\!\to\!\infty\) with \(z_1\) bounded,
      the \(z_0\)-terms in Eq.~\eqref{eq:coeff-match} dominate; finiteness of
      the left-hand side requires
      \(\gamma\beta_t=\dot\beta_t\Rightarrow\gamma=\dot\beta_t/\beta_t\).
      With this choice the entire identity holds for \emph{all}
      \((z_0,z_1)\) when we set \(\eta=0\).

\item \textbf{Unbounded target scale.}  
      Analogous to (iii) with roles exchanged:
      \(\gamma=\dot\alpha_t/\alpha_t\) and \(\eta=0\).

\item \textbf{Proportional collapse.}  
      Suppose \(z_0=k(y)z_1\).  Substituting into
      Eq.~\eqref{eq:coeff-match} yields a single free variable \(z_1\):
      \(
        \gamma\,[\beta_t k(y)+\alpha_t]\,z_1
        =[\dot\beta_t k(y)+\dot\alpha_t]\,z_1.
      \)
      Hence
      \(\displaystyle
      \gamma(t,y)=(\dot\beta_t k(y)+\dot\alpha_t) / (\beta_t k(y)+\alpha_t)
      \)
       and \(\eta(t,y)=0.\)

\end{enumerate}

In all five situations \(\gamma\) and \(\eta\) depend only on \((t,y)\) and the collapse maps \((f,g)\). Consequently the optimizer can reach a \emph{trivial} minimum in which \(v_{\theta}\) no longer guides a meaningful flow and the generated distribution collapses to a single/improper mode.
\qed

\section{Implementation Details}
\label{app:implementation_details}

\subsection{Synthetic Data}
The velocity network consumes three sinusoidal position embeddings that encode the latent state \(x_t\), the class label \(y\), and time \(t\).  Each embedding has dimensionality~8, and concatenating them yields a \(24\)-dimensional feature vector.  This vector is processed by a three-layer MLP whose hidden layers are all \(24\!\rightarrow\!24\) linear projections followed by \textsc{GELU} activations.  A final linear layer maps \(24\!\rightarrow\!1\), producing the scalar velocity.  The entire model—including all embedding parameters—contains \(1\,993\) trainable parameters.

To implement the additive shifts \(\mu_0(y)\) and \(\mu_1(y)\) we introduce two lightweight condition networks.  Each consists of a single linear layer that maps the \(8\)-dimensional class embedding to a scalar shift, plus a bias term, for \(9\) parameters per network. Both linear layers are initialized with all weights and biases set to \emph{zero}, ensuring the additive shifts are identically zero at the start of training.

We train with a batch size of \(1\,024\) using AdamW with \(\beta_1 = 0.9\) and \(\beta_2 = 0.95\).  Learning rates are fine-tuned per parameter group: \(1\times10^{-3}\) for the source shift network, \(1\times10^{-4}\) for the target shift network, and \(1\times10^{-5}\) for all remaining parameters.  Unless noted otherwise, models are trained for \(50\text{k}\) steps; mode-collapse experiments are extended to \(100\text{k}\) steps to ensure convergence.  

All the synthetic data experiments were executed on the CPU cores of an Apple M1 Pro laptop.

\subsection{ImageNet}

We re-implemented the open-source SiT\footnote{\url{https://github.com/willisma/SiT}} code-base in JAX and
reproduced the SiT/XL-2 configuration on ImageNet at \(256\times256\) resolution as our baseline. The exact architecture, datapipeline, optimizer (AdamW) and learning-rate schedule are identical to the original code. Training is performed on a single v6e-256 TPU slice.

For both source and target CAR-Flows, we append a lightweight label-conditioning network that maps the 1152-dimensional class embedding to a latent tensor of shape \(32\times32\times4\):

\begin{itemize}[nosep,leftmargin=1.4em]
  \item {Dense:} \(1152 \rightarrow 128\!\times\!4\!\times\!4\).
  \item {Upsampling:} three repeats of
        \(\mathrm{ConvTranspose2d}(\mathrm{kernel\_size}{=}2,\mathrm{stride}{=}2)\rightarrow\mathrm{GroupNorm}\rightarrow\mathrm{ReLU}\);  
        shapes
        \(4\times4\times128 \rightarrow 8\times8\times64 \rightarrow
        16\times16\times32 \rightarrow 32\times32\times16\).
  \item {Head:} \(3\times3\) convolution (padding 1) to
        \(32\times32\times4\), initialized with all weights and bias set to \emph{zero} to ensure the no shifts at the start of training.
\end{itemize}

Each network contains 2.4M parameters (\(\sim0.3\%\) of the SiT/XL-2 backbone) incurring negligible overhead.

We inherit the original AdamW optimizer for the SiT backbone with learning rate of \(1\times10^{-4}\) and global batch size of 256.  Both label-conditioning networks are trained with a higher learning rate \(1\times10^{-1}\); all other hyper-parameters are unchanged.

\section{Discussion of Findings}
\label{app:extra_exp}

\subsection{Relative Learning Rates} 
During our experiments, we discovered that the relative learning rate between the lightweight condition networks (which learn the additive shifts \(\mu_0(y)\) and \(\mu_1(y)\)) and the velocity‐network backbone has a first‐order effect on both the magnitude of the learned shifts and the convergence speed---a ``race condition'' between them. 

To study this, we reuse the synthetic‐data example: we fix the backbone’s learning rate at \(1\times10^{-5}\) and train source‐only and target‐only CAR-Flow models while sweeping the shift‐network learning rate across three orders of magnitude (\(1\times10^{-5}\), \(1\times10^{-4}\), and \(1\times10^{-3}\)). Figures~\ref{fig:mu-source} and \ref{fig:mu-target} plot the \(\mu_0\) and \(\mu_1\) trajectories for the source‐only and target‐only variants, respectively. At the smallest rate, the shifts remain near zero and the backbone carries the conditioning, yielding slower alignment; at the intermediate rate, the shifts grow steadily—though they can still be pulled by the backbone (e.g., \(\mu_0(y=B)\) starts positive but gradually becomes negative, reducing \(\lvert \mu_0(y=A) - \mu_0(y=B)\rvert\)); and at the largest rate, the shifts rapidly attain larger magnitudes and drive the quickest convergence. Figures~\ref{fig:w2-source} and \ref{fig:w2-target} report the Wasserstein distances, confirming that the highest shift-network rate achieves the fastest distribution alignment.

These observations suggest that empowering the shift networks with a higher learning rate can substantially accelerate alignment. In practice, one should choose a shift-network rate that is sufficiently high to speed convergence while preserving robust inter-class separation.

\begin{figure}[t]
  \centering
  \begin{subfigure}[b]{\textwidth}
    \centering
    \includegraphics[width=0.9\textwidth, trim=0 235 0 0, clip]{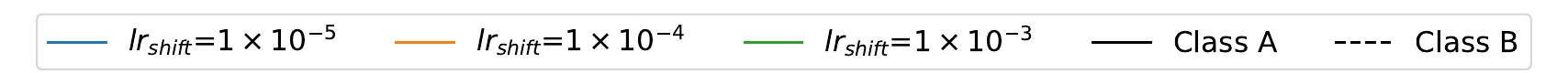}
    \label{fig:mu-legend}
  \end{subfigure}

  \begin{subfigure}[b]{0.51\textwidth}
    \includegraphics[width=\linewidth]{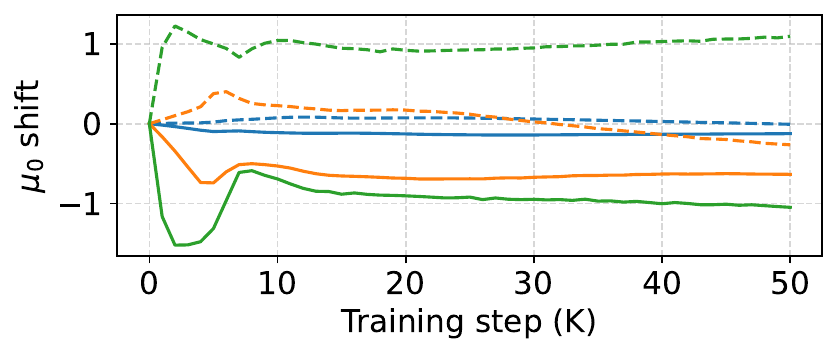}
    \captionsetup{margin={10mm,0mm}}
    \caption{Learned\,$\mu_0$ shift for source-only}
    \label{fig:mu-source}
  \end{subfigure}
  \hfill
  \begin{subfigure}[b]{0.47\textwidth}
    \includegraphics[width=\linewidth]{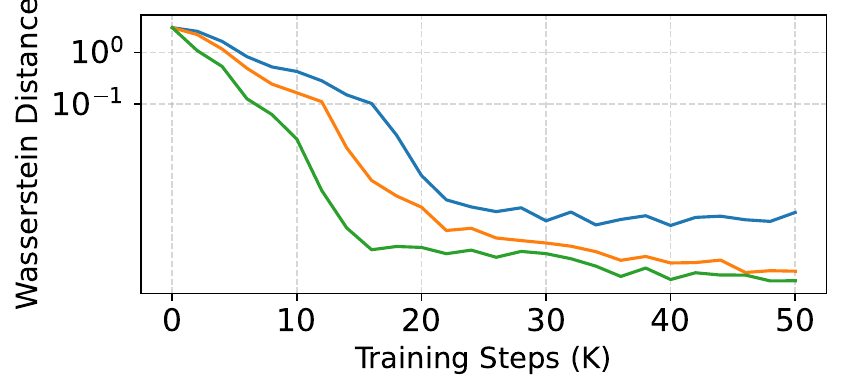}
    \captionsetup{margin={10mm,0mm}}
    \caption{Wasserstein distance for source-only}
    \label{fig:w2-source}
  \end{subfigure}

    \begin{subfigure}[b]{0.50\textwidth}
    \includegraphics[width=\linewidth]{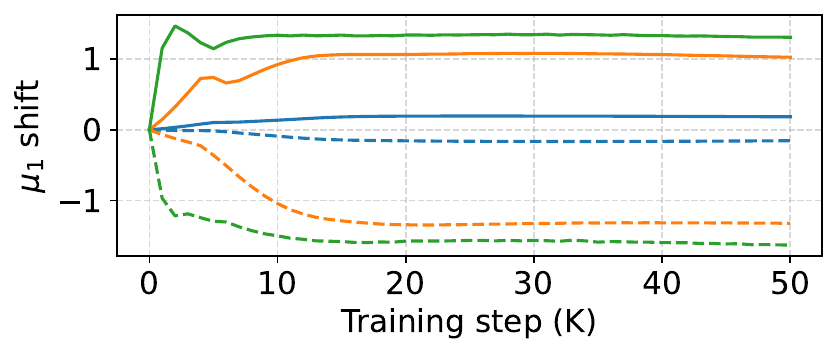}
    \captionsetup{margin={10mm,0mm}}
    \caption{Learned\,$\mu_1$ shift for target only}
    \label{fig:mu-target}
  \end{subfigure}
  \hfill
  \begin{subfigure}[b]{0.48\textwidth}
    \includegraphics[width=\linewidth]{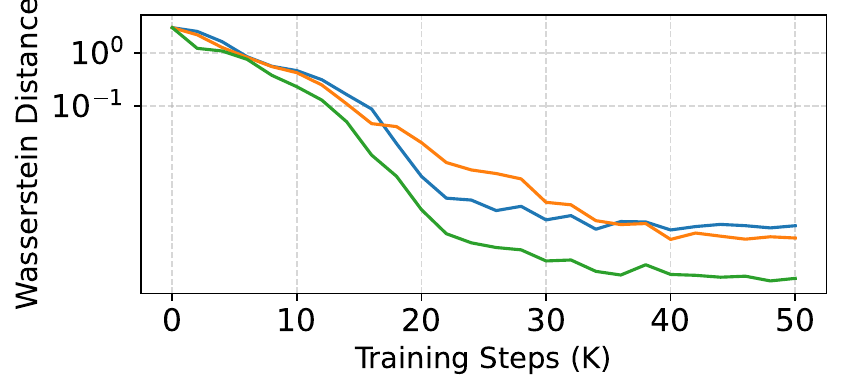}
    \captionsetup{margin={10mm,0mm}}
    \caption{Wasserstein distance for target-only}
    \label{fig:w2-target}
  \end{subfigure}

 \caption{Effect of varying the shift‐network learning rate relative to a fixed backbone rate of \(1\times10^{-5}\). Panels (a) and (c) show the trajectories of \(\mu_0\) and \(\mu_1\) for three learning rates; panels (b) and (d) plot the corresponding 1-D Wasserstein distances.  
  At the lowest shift‐network rate, the learned shifts remain negligible (slow alignment); at the intermediate rate, they grow steadily without instability; and at the highest rate, they overshoot then damp, yielding the fastest overall convergence despite early oscillations.}
  \label{fig:synth-lr}
\end{figure}

\begin{figure}[t]   
  \centering
 
  \includegraphics[width=0.9\linewidth]
                  {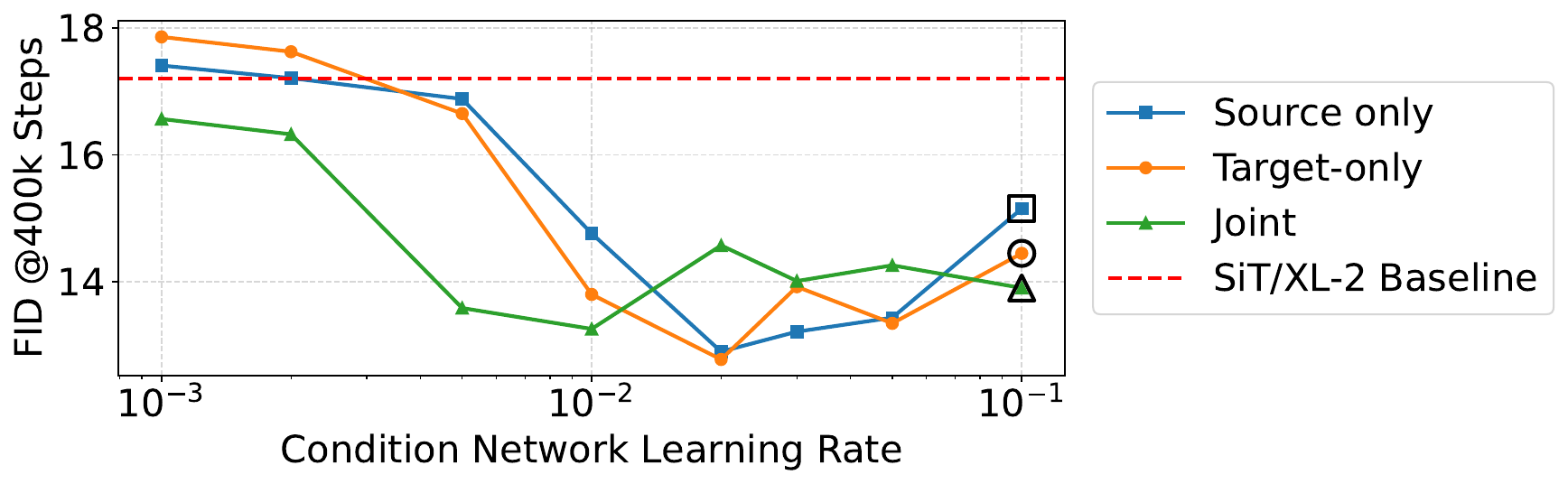}

  \vspace{-2mm}  
  \caption{%
    ImageNet FID at 400K steps vs.\ condition‐network learning rate for SiT‐XL/2 CAR-Flow variants: source‐only (blue), target‐only (orange), and joint (green). The dashed horizontal line marks the baseline FID of 17.2. All variants improve over the baseline as the shift‐network rate increases.
  }
  \label{fig:imagenet_fid}
  \vspace{-3mm}  
\end{figure}

To verify that these trends extend to large‐scale data, we performed the same learning‐rate sweep on ImageNet using the SiT‐XL/2 backbone.  We fixed the backbone’s learning rate at its default \(1\times10^{-4}\) and trained the CAR-Flow condition networks under three configurations---source‐only, target‐only, and joint---while varying their learning rate from \(1\times10^{-3}\) to \(1\times10^{-1}\). Figure~\ref{fig:imagenet_fid} shows the FID at 400K steps for each variant, with a SiT‐XL/2 baseline of 17.2 indicated by the dashed line.  

At the lower condition‐network rate (\(\sim 10^{-3}\)), source‐only and target‐only variants remain on-par or worse than the baseline, whereas the joint variant already outperforms it.  As the rate increases, all three variants deliver substantial improvements---target‐only achieves the best FID of 12.77 at \(2\times10^{-2}\), and the joint variant exhibits the most stable performance across the sweep.  Even at the highest rate (\(10^{-1}\)), all configurations remain well below the baseline.

These large‐scale results mirror our synthetic‐data findings: increasing the shift‐network learning rate accelerates alignment and improves sample quality, although the exact optimum varies by variant and requires some tuning. For simplicity and robust performance, we therefore adopt a rate of \(1\times10^{-1}\) in our main experiments. Figure~\ref{fig:car_ablation} presents example outputs from each CAR-Flow variant at this rate, demonstrating that the joint variant attains superior visual fidelity compared to the source‐only and target‐only models.

\subsection{Condition-aware source versus unconditional shift}
Complementary to the observations above, we examine whether making the condition-aware is beneficial compared to using a single unconditional shift. For intuition, consider the source-only CAR-Flow variant: it reparameterizes the source with a shift that depends on $y$. When the target distribution varies with the conditioning variable, aligning the source per condition should reduce transport effort relative to an unconditional source \citep{albergo2023stochastic}.

We test this in the 1-D synthetic setup of Sec.~\ref{sec:syn_data} by comparing (i) a learnable unconditional source with a global shift and (ii) a condition-aware source with a $y$-dependent shift. The Wasserstein distances are $0.058$ for (i) versus $0.041$ for (ii), indicating that per-condition alignment reduces transport. This supports CAR-Flow’s design choice and clarifies its distinction from an unconditional “learnable source”.

\section{Additional Results on CIFAR-10}
\label{app:cifar10}

To assess generalization beyond ImageNet, we trained a SiT-XL/2 baseline and CAR-Flow variants for 400k steps on CIFAR-10 using pixel-space diffusion (VAE omitted due to CIFAR-10’s low resolution $32\times32$). All CAR-Flow variants outperformed the baseline, demonstrating that the benefits of CAR-Flow generalize across datasets.

\begin{figure}[t]   
  \centering
  \includegraphics[width=\linewidth, trim=2pt 2pt 2pt 2pt, clip]
                  {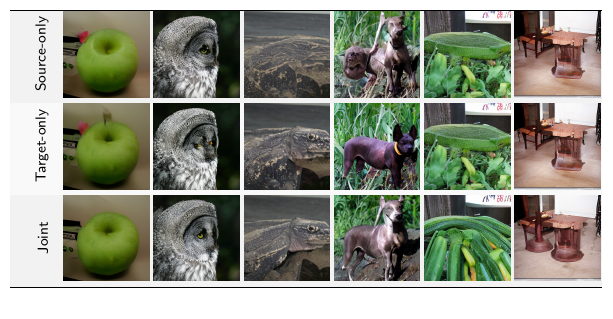}

  \vspace{-2mm}  
  \caption{%
    Qualitative ablation of CAR-Flow variants on SiT-XL/2 at 400K steps.  Each row corresponds to one variant—(top) source‐only, (middle) target‐only, and (bottom) joint CAR-Flow—and each column shows a generated sample for a different class from the same noise using \(\textit{cfg}=1\). The joint model produces the most realistic and semantically accurate images across all scenarios.
  }
  \label{fig:car_ablation}
  \vspace{-3mm}  
\end{figure}

\begin{table}[t]
\centering
\caption{Class-conditional image generation on CIFAR-10 (FID $\downarrow$).}
\renewcommand{\arraystretch}{1.2}
\setlength{\tabcolsep}{8pt}
\begin{tabular}{lcccc}
\toprule
& \textbf{Baseline}
& \shortstack{\textbf{Source-only}\\\textbf{CAR-Flow}}
& \shortstack{\textbf{Target-only}\\\textbf{CAR-Flow}}
& \shortstack{\textbf{Joint}\\\textbf{CAR-Flow}} \\
\midrule
\textbf{FID} $\downarrow$ & 13.8 & 7.5 & 11.1 & 10.6 \\
\bottomrule
\end{tabular}
\label{tab:cifar10}
\end{table}

\clearpage
\newpage

\section{Qualitative Results}
\label{app:qual}
Qualitative results are provided in Figure~\ref{fig:imagenet_car_vs_sit}.

\begin{figure}[h]   
  \centering
  \includegraphics[width=\linewidth, trim=2pt 2pt 2pt 2pt, clip]
                  {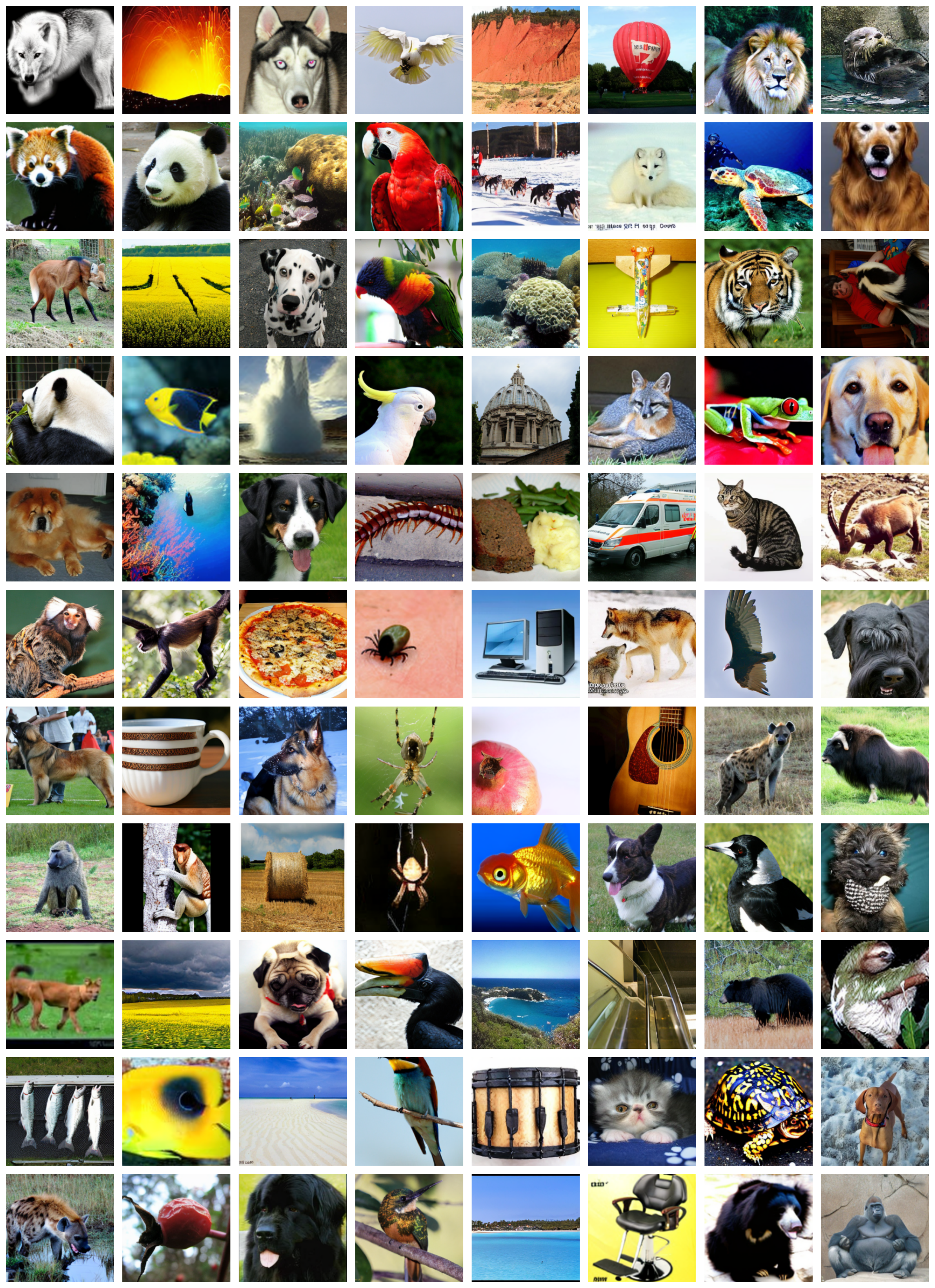}

  \vspace{-2mm}  
  \caption{%
    Randomly selected samples generated from our \(\text{SiT-XL/2}_{\text{CAR-Flow Joint}}\) model trained on ImageNet $256\times256$ data using \(\textit{cfg}=4\).
  }
  \label{fig:imagenet_car_vs_sit}
  \vspace{-3mm}  
\end{figure}

\clearpage


\end{document}